\documentclass[11pt]{article}

\usepackage[final]{acl}

\usepackage{times}
\usepackage{latexsym}

\usepackage[T1]{fontenc}
\usepackage[utf8]{inputenc}

\usepackage{microtype}

\usepackage{inconsolata}

\usepackage{graphicx}

\usepackage[utf8]{inputenc}
\usepackage[T1]{fontenc}
\usepackage{amsmath,amssymb,amsthm}
\usepackage{mathtools}
\usepackage{hyperref}
\usepackage{cleveref}
\usepackage{booktabs} 
\theoremstyle{plain}
\newtheorem{theorem}{Theorem}[section]

\newtheorem{proposition}[theorem]{Proposition}

\theoremstyle{definition}

\theoremstyle{remark}

\usepackage{amsmath} 
\title{The Cost of Compression: A Rate-Distortion Limit on Factual Hallucination}

\newcommand{\samethanks}{\footnotemark[\value{footnote}]}

\author{
  \textbf{Xi Wang}\textsuperscript{1,2}\thanks{Equal contribution.}
  \quad
  \textbf{Shijia Xu}\textsuperscript{3}\samethanks
  \quad
  \textbf{Rongfeng Guo}\textsuperscript{4}
  \\
  \textsuperscript{1}Hefei Institutes of Physical Science, Chinese Academy of Sciences, China
  \\
  \textsuperscript{2}University of Science and Technology of China, China
  \\
  \textsuperscript{3}Chongqing University, China
  \\
  \textsuperscript{4}Shenzhen University, China
  \\
  \texttt{xw\_cs@mail.ustc.edu.cn}
  \quad
  \texttt{shijiaxu@stu.cqu.edu.cn}
}

\begin{document}
\maketitle

\begin{abstract}
Factual hallucination in closed-book question answering is often treated as a coverage problem: a model fails because the relevant fact is absent from its internal memory. 
This view misses a second source of error. 
Even when a fact has been observed, finite memory may force it to be stored only approximately. 
We study this effect through a simple \emph{coverage--compression} model of factual recall.

We consider an unstructured question-answering task with $N$ possible queries and $K$ possible answers. 
A learner observes $M$ training facts, compresses them into at most $B$ bits, and answers uniformly drawn test queries without retrieval. 
For a uniformly random ground-truth mapping, we prove
\[
\mathcal{E}
\ge
\frac{M}{N}\,
\delta^\star\!\left(\frac{B}{M}\right)
+
\left(1-\frac{M}{N}\right)
\left(1-\frac{1}{K}\right),
\]
where $\delta^\star(r)$ is the inverse rate-distortion function of a uniform $K$-ary source under zero-one loss. 
The two terms separate \emph{compression distortion} on observed facts from \emph{missing coverage} on unobserved facts.

The bound gives a compact way to reason about selective memory, forced compression, structure, retrieval, abstention, and long-context organization. 
We study the predicted signatures with theory-implied simulations and controlled fact-injection probes in modern language models that vary fact load and effective trainable memory. 
The result is not a complete theory of hallucination, but an information-theoretic account of a separable failure mode: lossy recall of observed facts under finite memory.
\end{abstract}

\section{Introduction}
\label{sec:intro}

Large language models can answer many factual questions without explicit retrieval, suggesting that some factual knowledge is stored in their parameters. 
Yet factual errors remain common across short-form, long-form, knowledge-graph, dialogue-level, and in-the-wild factuality evaluations \citep{wei2024simpleqa,wei2024longformfactuality,liu2024grapheval,luo2024halludial,zhao2024wildhallucinations,bayat2024factbench,jacovi2025factsgrounding,liu2025verifact}. 
A standard explanation is missing coverage: the relevant fact was absent from the model's training data, inaccessible to its parameters, or unavailable at inference time. 
This explanation motivates retrieval and grounding methods, which move part of the factual burden from parametric memory to external evidence \citep{yan2024crag,edge2024graphrag,wang2024ragbestpractices,ru2024ragchecker,lee2025finetunerag,wu2025multirag}. 
It is important, but incomplete.

A fact can be observed and still be recalled incorrectly. 
A compressed database gives the basic intuition: if a record is absent, the system must guess; if the record is present but stored in a lossy code, the system may retrieve the wrong value. 
Closed-book factual recall faces the same tension. 
A model must compress many facts into a finite internal representation, and recent controlled studies of new factual knowledge, knowledge editing, memorization, and knowledge capacity suggest that factual storage is itself a constrained resource \citep{gekhman2024newknowledge,huang2024hallueditbench,allenzhu2024knowledgecapacity,morris2025memorize,zhou2025taskstratifiedknowledge}. 
The resulting error is not merely a failure of exposure, but a failure of recall under compression.

\begin{figure*}[t]
    \centering
    \includegraphics[width=1.0\textwidth]{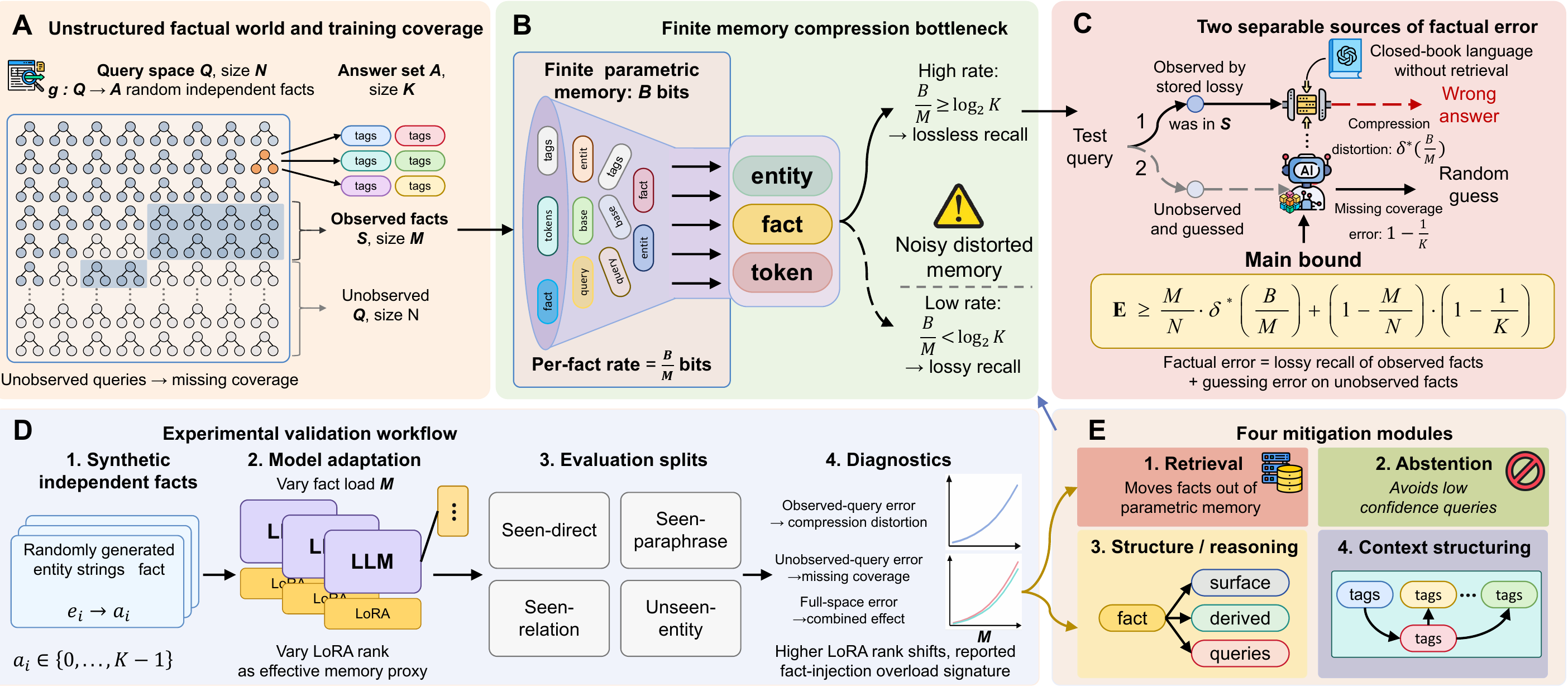}
    \caption{
    Coverage--compression framework. 
    Closed-book factual error decomposes into missing coverage on unobserved facts and compression distortion on observed facts stored under finite memory. 
    The same diagnostics are used in controlled fact-injection experiments: observed-query error measures lossy recall, unobserved-query error measures missing coverage, and full-space error measures their mixture.
    }
    \label{fig:framework}
\end{figure*}

We isolate this mechanism with a deliberately simple model, summarized in Figure~\ref{fig:framework}. 
Let $\mathcal{Q}$ be a query set of size $N$ and $\mathcal{A}$ an answer set of size $K$. 
The ground-truth mapping $g:\mathcal{Q}\to\mathcal{A}$ is drawn uniformly at random, so the answers contain no semantic regularity that a learner can exploit. 
A learner observes $M$ training pairs, encodes the observed information into at most $B$ bits, and answers uniformly drawn test queries without retrieval. 
This random-source setting is not a model of all factual knowledge; it is a null model for unstructured factual memory, where correct prediction must come from stored information. 
For example, when $K=10$, one lossless answer requires $\log_2 10\approx 3.32$ bits, so a $160$-bit answer memory stores only about $48$ independent facts under the optimistic free-addressing convention.

The central claim is that closed-book factual error has two separable sources: the queried fact may not be effectively covered by memory, or it may be covered but stored too lossily to be recalled correctly. 
Conditioned on the training set, the observed labels are $M$ independent $K$-ary symbols. 
The learner's memory is a compressed representation of this sequence. 
For a uniform $K$-ary source under zero-one loss, let $\delta^\star(r)$ denote the minimum achievable reconstruction error when each symbol is encoded at rate $r$ bits. 
Our main result, stated formally in Theorem~\ref{thm:rd}, gives
\[
\mathcal{E}
\ge
\frac{M}{N}\,
\delta^\star\!\left(\frac{B}{M}\right)
+
\left(1-\frac{M}{N}\right)
\left(1-\frac{1}{K}\right).
\]
The first term is the distortion on observed facts; the second is the guessing error on unobserved queries. 
When $B/M\ge \log_2K$, observed facts can be stored losslessly and the bound reduces to the usual unseen-query floor. 
When $B/M<\log_2K$, even facts seen during training have a nonzero recall error floor.

This decomposition clarifies several behaviors that are often discussed separately. 
Finite memory alone does not imply that more data must hurt an optimal learner: with selective allocation, additional observations can simply produce diminishing returns. 
Compression overload appears when many observed facts are forced to share a fixed effective representation, lowering the per-fact rate $B/M$. 
Structure changes the trade-off by reducing the number of independent facts that must be stored. 
Long-context organization gives an analogous working-memory problem, where facts are present in the prompt but compete for limited effective attention; recent long-context evaluations show that retrieval becomes substantially harder when context length, task complexity, latent structure, long in-context learning, or non-literal matching increases \citep{hsieh2024ruler,zhang2024infinitebench,bai2024longbenchv2,li2024needlebench,vodrahalli2024michelangelo,li2024longiclbench,modarressi2025nolima}. 

We evaluate the framework through predicted signatures rather than aggregate benchmark scores. 
Theory-implied simulations instantiate the exact distortion-rate envelope, while controlled fact-injection probes vary the number of independent synthetic facts and the effective trainable memory budget in modern language models. 
Across both components, we report observed-query error for compression distortion, unobserved-query error for missing coverage, and full-space error for their mixture.

Taken together, the paper makes three contributions. 
First, it identifies a compression-induced source of factual error that is separate from missing coverage. 
Second, it proves a rate-distortion lower bound showing how observed-fact distortion and unseen-query guessing combine in a single coverage--compression trade-off. 
Third, it turns the bound into a diagnostic framework, yielding measurable signatures in theory-implied simulations and controlled fact-injection probes. 
The claim is intentionally narrow: compression is not a complete theory of hallucination, but a separable mechanism that predicts when observed facts become unreliable under finite effective memory.

\section{Problem Setting}
\label{sec:setting}

We study closed-book factual recall under a finite memory budget. 
Let $\mathcal{Q}$ be a query set with $|\mathcal{Q}|=N$, and let $\mathcal{A}$ be an answer set with $|\mathcal{A}|=K\ge 2$. 
The ground-truth mapping is
\[
g:\mathcal{Q}\to\mathcal{A}.
\]
To isolate unstructured factual memory, we draw $g$ uniformly at random from all $K^N$ mappings. 
Equivalently, the labels $\{g(q):q\in\mathcal{Q}\}$ are independent uniform $K$-ary symbols. 
This random-source model is not a claim that real knowledge is random; it is a null model in which semantic, logical, and distributional shortcuts have been removed.

A learner observes a uniformly sampled training set $S\subset\mathcal{Q}$ of size $M$, together with the labels
\[
X_S=(g(q))_{q\in S}.
\]
It encodes the observed information into a memory state $W\in\mathcal{W}$ with $|\mathcal{W}|\le 2^B$. 
At test time, it receives a query $q$ and must answer without retrieval. 
We allow arbitrary randomized encoders and decoders. 
Under the free-addressing convention, the decoder may condition on the training index set $S$ and the query identity:
\[
\begin{aligned}
\psi &: \mathcal{Q}\times\mathcal{S}_M\times\mathcal{W}
\to \Delta(\mathcal{A}),\\
\mathcal{S}_M
&=
\{S\subseteq\mathcal{Q}: |S|=M\}.
\end{aligned}
\]
Let $\hat g(q)$ denote the answer sampled from, or deterministically chosen by, this decoder. 
The budget $B$ is charged only to stored answer information; query identities and the set $S$ are treated as available addresses. 
This convention favors the learner. 
If query identities must also be stored, an additional address cost on the order of $\log_2 \binom{N}{M}$ is required.

We measure expected full-space error under the uniform test distribution:
\begin{equation}
\label{eq:risk}
\mathcal{E}
=
\mathbb{E}
\left[
\frac{1}{N}
\sum_{q\in\mathcal{Q}}
\mathbf{1}\{\hat g(q)\neq g(q)\}
\right],
\end{equation}
where the expectation is over the random ground truth, the sampled training set, and learner randomness.

We distinguish two notions of coverage. 
A query has \emph{training coverage} if $q\in S$. 
It has \emph{effective memory coverage} if the learner has allocated enough information to recall its answer with nontrivial fidelity. 
A fact may therefore be present in the training set but still be recalled incorrectly.

\section{Coverage--Compression Theory}
\label{sec:theory}

The theory separates two sources of error. 
If a test query was not observed, its label is independent of the learner's memory and the learner must guess. 
If the query was observed, the learner still has to reconstruct its label from a finite-memory representation. 
The first effect gives a coverage floor; the second is a rate-distortion problem.

\paragraph{Coverage-only floor.}
If every observed fact were recalled perfectly, the only unavoidable error would come from queries outside $S$:
\begin{equation}
\label{eq:coverage}
\mathcal{E}
\ge
\left(1-\frac{M}{N}\right)
\left(1-\frac{1}{K}\right).
\end{equation}
For $q\notin S$, the label $g(q)$ is uniform over $\mathcal{A}$ and independent of $(S,X_S,W)$, so no predictor can be correct with probability greater than $1/K$. 
This baseline treats observation as equivalent to lossless storage. 
We next remove that assumption.

\paragraph{Observed facts as source coding.}
Conditioned on $S$, the observed labels $X_S$ are $M$ independent uniform symbols over a $K$-ary alphabet. 
The memory state $W$ is a compressed representation of this sequence, and answering an observed query reconstructs one coordinate of $X_S$ from the compressed memory and the query address. 
The relevant distortion is factual recall error,
\[
d(a,\hat a)=\mathbf{1}\{a\neq \hat a\}.
\]

For a uniform $K$-ary source under zero-one distortion, the rate-distortion function is
\begin{equation}
\label{eq:rd-function}
\begin{aligned}
R_K(D)
&=
\log_2 K
-
h_2(D)
-
D\log_2(K-1),\\
&\hspace{2.0em}
0\le D\le 1-\frac{1}{K},
\end{aligned}
\end{equation}
where $h_2(D)=-D\log_2D-(1-D)\log_2(1-D)$. 
This is the standard rate-distortion function for a uniform $K$-ary source under Hamming distortion \citep{shannon1959coding,cover2006elements}. 
Thus perfect recall requires $\log_2 K$ bits per fact, while random guessing has distortion $1-1/K$ and requires zero bits. 
Let
\begin{equation}
\label{eq:delta}
\delta^\star(r)
=
\inf\{D\ge0:R_K(D)\le r\}
\end{equation}
denote the smallest possible reconstruction error at rate $r$ bits per observed fact.

\begin{theorem}[Coverage--compression bound]
\label{thm:rd}
Under the random-source model and the free-addressing convention, every learner with memory budget $B$ satisfies
\begin{equation}
\label{eq:rd-bound}
\begin{aligned}
\mathcal{E}
&\ge
\frac{M}{N}
\delta^\star\!\left(\frac{B}{M}\right)\\
&\quad+
\left(1-\frac{M}{N}\right)
\left(1-\frac{1}{K}\right).
\end{aligned}
\end{equation}
\end{theorem}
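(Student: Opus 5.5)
We split the full-space error according to whether the test query lies in the training set. Because the test query $q$ is uniform on $\mathcal{Q}$ and $|S|=M$, we can write
\[
\mathcal{E}=\frac{M}{N}\,\mathcal{E}_{\mathrm{obs}}+\left(1-\frac{M}{N}\right)\mathcal{E}_{\mathrm{unobs}},
\]
where $\mathcal{E}_{\mathrm{obs}}=\mathbb{E}\bigl[\frac1M\sum_{q\in S}\mathbf{1}\{\hat g(q)\neq g(q)\}\bigr]$ and $\mathcal{E}_{\mathrm{unobs}}$ is the analogous average over $\mathcal{Q}\setminus S$. The unobserved term is already handled by the argument behind \eqref{eq:coverage}. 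For $q\notin S$, the label $g(q)$ is uniform and independent of $(S,X_S,W)$ and of the decoder's randomness, so the decoder is correct with probability exactly $1/K$. Hence $\mathcal{E}_{\mathrm{unobs}}= 1-1/K$, and the remaining task is to show $\mathcal{E}_{\mathrm{obs}}\ge\delta^\star(B/M)$.

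For the observed term we condition on $S=s$. Given $s$, the source $X_s$ consists of $M$ i.i.d.\ uniform $K$-ary symbols, and the decoder produces reconstructions $\hat X_i=\hat g(q_i)$ from $(q_i,s,W)$. Let $D_s=\frac1M\sum_i\Pr[\hat X_i\neq X_i\mid S=s]$. We then run the standard rate-distortion converse:
\[
B\ge H(W\mid S=s)\ge I(X_s;W\mid S=s)\ge I(X_s;\hat X^M\mid S=s)\ge\sum_{i=1}^M I(X_i;\hat X_i)\ge\sum_i R_K(D_i)\ge M\,R_K(D_s).
\]
The steps are justified as follows. The first uses $|\mathcal{W}|\le 2^B$. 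The second holds because $W$ is a (randomized) function of $X_s$ and $s$. The third is data processing, since $\hat X^M$ depends on $X_s$ only through $W$ together with independent decoder randomness. The fourth uses independence of the source symbols, via chain rule and superadditivity. The fifth applies the definition of $R_K$ at per-coordinate distortion $D_i$, extended by $R_K(D)=0$ for $D\ge 1-1/K$. The last uses convexity of $R_K$ through Jensen.

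This gives $R_K(D_s)\le B/M$. By definition \eqref{eq:delta}, together with the facts that $R_K$ is continuous and nonincreasing, it follows that $D_s\ge\delta^\star(B/M)$. Averaging over $s$ yields $\mathcal{E}_{\mathrm{obs}}\ge\delta^\star(B/M)$, and substituting both bounds into the decomposition gives \eqref{eq:rd-bound}.

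The main obstacle is making the single-letterization clean in the presence of randomized encoders and decoders and the free-addressing side information. Two points need care. First, $R_K$ as given in \eqref{eq:rd-function} must be extended to $D>1-1/K$ so that it remains convex and nonincreasing on $[0,1]$. Second, the data-processing step must be justified when the decoder is randomized independently for each query; we would introduce the decoder's private randomness explicitly, independent of $X_s$ given $W$. The needed properties of $R_K$ (convexity and monotonicity) are standard, and we would cite \citet{cover2006elements} for them rather than reprove them.
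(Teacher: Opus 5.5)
Your proposal is correct and follows the same route as the paper. It splits the error by whether the query lies in $S$, uses independence of $g(q)$ from $(S,W)$ for unobserved queries, and bounds the observed part via $B\ge H(W\mid S)\ge I(X_S;W\mid S)\ge I(X_S;\widehat X_S\mid S)$ together with the rate-distortion converse. You go further than the paper only in spelling out the single-letterization (chain rule, per-coordinate $R_K$, Jensen on the convex zero-extension) and the final averaging over $S$, both of which the paper folds into ``the converse of rate-distortion theory.''
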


\begin{proof}[Proof idea]
Condition on $S$. 
Since $W$ has at most $2^B$ values,
\[
I(X_S;W\mid S)\le H(W\mid S)\le B.
\]
Let $\widehat X_S=(\hat g(q))_{q\in S}$ denote the decoder's reconstructions on the observed queries. 
Because $\widehat X_S$ is produced from $(S,W)$ and decoder randomness, data processing gives
\[
I(X_S;\widehat X_S\mid S)\le B.
\]
If the observed-fact reconstruction error is $D_{\mathrm{obs}}$, the rate-distortion converse for the uniform $K$-ary source implies
\[
R_K(D_{\mathrm{obs}})\le \frac{B}{M},
\qquad
D_{\mathrm{obs}}
\ge
\delta^\star\!\left(\frac{B}{M}\right).
\]
A uniformly drawn query lies in $S$ with probability $M/N$ and outside $S$ with probability $1-M/N$. 
The former contributes the observed-fact distortion term; the latter contributes the guessing error $1-1/K$. 
Combining the two cases gives Eq.~\eqref{eq:rd-bound}. 
The full proof is in Appendix~\ref{app:rd-proof}.
\end{proof}

The bound decomposes the error into
\[
\begin{aligned}
\mathcal{E}_{\mathrm{obs}}
&=
\frac{M}{N}\delta^\star(B/M),\\
\mathcal{E}_{\mathrm{miss}}
&=
\left(1-\frac{M}{N}\right)(1-1/K).
\end{aligned}
\]
The first term is lossy recall of observed facts; the second is guessing on unobserved queries.
When $B/M\ge\log_2K$, observed facts can be stored losslessly and the first term vanishes. 
When $B/M<\log_2K$, even facts that appeared in training have a nonzero recall error floor.

\begin{proposition}[Tightness under free addressing]
\label{prop:tightness}
Under the same free-addressing convention, the bound in Theorem~\ref{thm:rd} is achievable up to finite-blocklength effects. 
Specifically, if $M\to\infty$ with $B/M\to r$, then standard rate-distortion codes achieve observed-fact distortion $\delta^\star(r)+o(1)$, while unobserved queries are answered by random guessing.
\end{proposition}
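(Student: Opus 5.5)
We prove achievability by an explicit two-part learner and then check that its full-space error matches the right-hand side of Eq.~\eqref{eq:rd-bound} up to $o(1)$. The learner stores only a lossy code for the observed labels and spends no bits on unobserved queries. Fix $r>0$ and a target distortion $D=\delta^\star(r)$. If $r\ge\log_2K$ we use the trivial lossless code: store $X_S$ exactly in $\lceil M\log_2K\rceil\le B$ bits, up to rounding absorbed in the limit. The interesting regime is $0<r<\log_2K$, where $D\in(0,1-1/K)$ and $R_K(D)=r$, since $R_K$ is continuous and strictly decreasing on $[0,1-1/K]$.

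\emph{Step 1 (observed facts).} Conditioned on $S$, the vector $X_S$ is i.i.d.\ uniform on $\mathcal{A}^M$, exactly the source of Eq.~\eqref{eq:rd-function}. We invoke the standard direct rate-distortion theorem for i.i.d.\ sources with bounded distortion \citep{shannon1959coding,cover2006elements}. For any $\epsilon>0$ and all large $M$, there is a codebook of at most $2^{M(R_K(D)+\epsilon)}$ reconstruction words with expected per-symbol Hamming distortion at most $D+\epsilon$. To respect the budget exactly, we apply this at distortion level $D'=\delta^\star(r-2\epsilon)$. Then the rate is $M(r-\epsilon)\le B$ once $B/M\ge r-\epsilon$, which holds eventually since $B/M\to r$. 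The encoder maps $X_S$ to the codeword index $W$. The decoder, given $(q,S,W)$ with $q\in S$, outputs the coordinate of the codeword indexed by the position of $q$ in $S$; this uses free addressing. The expected observed-fact error is therefore at most $\delta^\star(r-2\epsilon)+\epsilon$. Continuity of $\delta^\star$ (as the inverse of a continuous, strictly monotone function) lets us send $\epsilon\to0$ along a diagonal sequence, giving $\delta^\star(r)+o(1)$.

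\emph{Step 2 (unobserved queries and combination).} For $q\notin S$ the decoder outputs a uniform answer, or any fixed answer. Because $g(q)$ is uniform and independent of everything else, the error on each such query is exactly $1-1/K$. Averaging over the $M$ observed and $N-M$ unobserved queries gives
\[
\mathcal{E}=\frac{M}{N}\bigl(\delta^\star(r)+o(1)\bigr)+\Bigl(1-\frac{M}{N}\Bigr)\Bigl(1-\frac1K\Bigr).
\]
This matches Theorem~\ref{thm:rd} with $B/M$ replaced by its limit $r$. Applying continuity of $\delta^\star$ once more, the gap to the bound evaluated at $B/M$ is $o(1)$.

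\emph{Main obstacle.} The only delicate point is the interface between the asymptotic coding theorem and the finite budget: rounding, $B/M$ approaching $r$ from below, and the endpoint cases $r=0$ and $r\ge\log_2K$. I would handle these with the slack $\epsilon$ and continuity of $\delta^\star$ as above. For $r=0$, guessing on every query trivially achieves $1-1/K=\delta^\star(0)$. I would also note explicitly that the $o(1)$ is in $M$; nothing requires $N\to\infty$, since the unobserved-query term is exact.
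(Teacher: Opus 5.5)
Your proposal is correct and takes the same route as the paper's appendix argument: encode $X_S$ with a block rate-distortion code at rate about $B/M$, use the free address $S$ to read off the queried coordinate, and guess uniformly off $S$, so the unobserved term is exactly $(1-M/N)(1-1/K)$. Your budget handling (coding at $\delta^\star(r-2\epsilon)$, then using continuity of $\delta^\star$) is more careful than the paper's sketch, which codes directly at the moving rate $B/M$; it also covers the edge case $r=\log_2K$ with $B/M$ approaching from below, where your lossless shortcut would not fit in the budget.
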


This shows that Eq.~\eqref{eq:rd-bound} is not merely a loose lower bound: in the random-source model, it is the natural rate-distortion envelope. 
The proof and finite-blocklength caveat are given in Appendix~\ref{app:rd-proof}.

\paragraph{Address-charged variant.}
The free-addressing convention gives the learner the training index set $S$ and query identities as addresses. 
If those identities must be encoded explicitly, storing which $M$ queries are covered requires an additional address cost on the order of $\log_2\binom{N}{M}$ bits. 
A conservative address-charged analogue replaces the answer-memory budget by
\[
B_{\mathrm{ans}}
=
\left[
B-\log_2\binom{N}{M}
\right]_+,
\]
where $[x]_+=\max\{x,0\}$. 
The same rate-distortion argument then gives
\[
\mathcal{E}
\ge
\frac{M}{N}
\delta^\star\!\left(\frac{B_{\mathrm{ans}}}{M}\right)
+
\left(1-\frac{M}{N}\right)
\left(1-\frac{1}{K}\right).
\]
We use the free-addressing convention in the main theorem because it isolates answer-memory compression and favors the learner; the address-charged variant quantifies how explicit indexing would only make the memory constraint tighter.

\paragraph{Consequences.}
The lossless endpoint gives the capacity scale
\[
L_{\mathrm{lossless}}
\approx
\frac{B}{\log_2K},
\]
the number of independent $K$-ary facts that can be stored without error under free addressing. 
This is a capacity scale, not an optimal training size.

Finite memory alone does not imply that more observations must hurt an optimal learner. 
With selective allocation, the learner may spend memory on a subset of the facts and ignore the rest. 
An idealized selective-memory curve is
\begin{equation*}
\begin{aligned}
\mathcal{E}_{\mathrm{sel}}(M)
&=
\min\left\{
1-\frac{1}{K},
\min_{1\le L\le M}\Phi(L)
\right\}, \\
\Phi(L)
&=
\frac{L}{N}\delta^\star\!\left(\frac{B}{L}\right)
+
\left(1-\frac{L}{N}\right)
\left(1-\frac{1}{K}\right).
\end{aligned}
\end{equation*}
Because the feasible set grows with $M$, $\mathcal{E}_{\mathrm{sel}}(M)$ is non-increasing in $M$. 
Selective memory therefore predicts saturation or diminishing returns, not unavoidable degradation.

Compression overload appears under the stronger regime in which all $M$ observed facts must share one fixed effective representation. 
Then the operating point is Eq.~\eqref{eq:rd-bound}: increasing $M$ improves training coverage but lowers the per-fact rate $B/M$, so observed-fact fidelity can degrade even when full-space error does not. 
A U-shaped full-space curve is therefore not a universal consequence of finite memory; it requires additional mechanisms such as forced uniform allocation, interference, or an evaluation distribution concentrated on competing facts.

\paragraph{Structured facts.}
The preceding analysis treats all query labels as independent. 
Real knowledge often has structure: one base fact can support many derived queries. 
A simple deductive-closure model makes this explicit. 
Let $Z_1,\ldots,Z_{N_{\mathrm{base}}}$ be independent base labels, each uniformly distributed over $\mathcal{A}$. 
Each base label determines a block of $c$ derived queries through known one-to-one transformations, so $N=cN_{\mathrm{base}}$. 
If the learner observes $M$ base facts and $cM\le N$, the forced-compression analogue gives
\begin{equation}
\label{eq:structured-bound}
\begin{aligned}
\mathcal{E}_{\mathrm{struct}}
&\ge
\frac{cM}{N}
\delta^\star\!\left(\frac{B}{M}\right)\\
&\quad+
\left(1-\frac{cM}{N}\right)
\left(1-\frac{1}{K}\right).
\end{aligned}
\end{equation}
Compared with Eq.~\eqref{eq:rd-bound}, structure increases the number of surface queries covered per stored independent fact. 
This is the information-theoretic role of structure in our framework: reasoning does not remove the memory limit, but it can reduce the number of independent facts that must be compressed. 
The full derivation and the selective structured variant are in Appendix~\ref{app:deductive-proof}.

\section{Empirical Signatures of Coverage--Compression}
\label{sec:experiments}

The bound in Section~\ref{sec:theory} predicts a pattern rather than a single aggregate score. 
At fixed effective memory, increasing the number of independent facts should improve training coverage but reduce per-fact fidelity; increasing memory or adding structure should move the failure point. 
We test these predictions with theory-implied simulations and controlled real-model probes.

\subsection{Setup and diagnostics}
\label{sec:exp-setup}

We report three quantities throughout. 
\emph{Observed-query error} measures recall error on injected or observed facts and diagnoses compression distortion. 
\emph{Unobserved-query error} measures error on entities whose facts were not injected and diagnoses missing coverage. 
\emph{Full-space error} averages the two under a specified query mixture. 
We do not infer compression from aggregate accuracy alone; the key evidence is whether observed-query error changes with fact load under a controlled memory budget.

For the fact-injection probes, we generate synthetic independent facts
\[
e_i \mapsto a_i,\qquad a_i\in\{0,\ldots,K-1\},
\]
where entity strings are randomly generated and labels are sampled independently. 
This controlled setting follows recent work on new factual knowledge, knowledge editing, memorization, and knowledge capacity: coverage and fact load are known by construction, making observed-query error directly measurable \citep{gekhman2024newknowledge,huang2024hallueditbench,allenzhu2024knowledgecapacity,morris2025memorize,zhou2025taskstratifiedknowledge}. 
Unless otherwise stated, we use $K=10$, so chance-level error is $1-1/K=0.9$. 
Models are adapted on templates that state the entity-label relation and evaluated on held-out query forms: seen-direct, seen-paraphrase, seen-relation with a unique target answer, and unseen-entity.

We vary the number of injected facts $M$ and the LoRA rank used for adaptation \citep{hu2021lora,liu2024dora}. 
LoRA rank is used only as a controlled proxy for effective trainable memory; it should not be identified with the theoretical bit budget $B$. 
This proxy is controlled within a fixed target-module set, but target modules, optimization, model architecture, and quantization can still affect usable factual capacity. 
Hardware, training configurations, decoding rules, seeds, and numerical mean $\pm$ standard error tables are reported in Appendix~\ref{app:real-model-details}.

\begin{table*}[t]
\centering
\small
\caption{
Model suite used across the real-model probes. 
The suite is structured rather than exhaustive: Qwen and DeepSeek models are used for fact-injection probes, while Kimi-K2 and GLM-5 are used only for the long-context analogue.
}
\label{tab:model-suite}
\begin{tabular}{l l l}
\toprule
Model & Role & Main use \\
\midrule
Qwen3-8B & Dense baseline & Fact injection: full rank/load sweep \\
Qwen3-32B & Larger dense model & Fact injection: fixed-rank scale comparison \\
Qwen3-30B-A3B & MoE comparison & Fact injection: architecture comparison \\
DeepSeek-R1-Distill-Qwen-32B & Reasoning-distilled model & Fact injection: query-form robustness \\
Kimi-K2-Instruct & Frontier MoE reference & Long-context analogue only \\
GLM-5 family & Frontier reference & Long-context analogue only \\
\bottomrule
\end{tabular}
\end{table*}

The suite covers dense, MoE, and reasoning-distilled families. 
Qwen3 provides both dense and MoE variants, and DeepSeek-R1-Distill-Qwen-32B tests reasoning-style distillation in the fact-injection setting. 
Kimi-K2-Instruct and GLM-5 are used only as frontier references for the qualitative long-context analogue \citep{yang2025qwen3,deepseekai2025r1,kimi2025k2}. 
Hardware, training configurations, decoding rules, seeds, and representative numerical mean $\pm$ standard error tables are reported in Appendix~\ref{app:real-model-details}.

\subsection{Theory-implied envelope}
\label{sec:exp-theory-envelope}

We first instantiate the theoretical envelope itself. 
This component is not a language-model measurement. 
It numerically inverts the $K$-ary rate-distortion function and substitutes $\delta^\star(B/M)$ into the risk expressions from Section~\ref{sec:theory}. 
Its role is to make the predicted geometry explicit before testing whether the same signatures appear in model probes.

For unstructured forced compression,
\[
\mathcal{E}_{\mathrm{forced}}(M)
=
\frac{M}{N}\delta^\star\!\left(\frac{B}{M}\right)
+
\left(1-\frac{M}{N}\right)\left(1-\frac{1}{K}\right).
\]
For a structured deductive-closure setting in which one base fact supports $c$ derived queries,
\[
\mathcal{E}_{\mathrm{struct}}(M)
=
\frac{cM}{N}\delta^\star\!\left(\frac{B}{M}\right)
+
\left(1-\frac{cM}{N}\right)\left(1-\frac{1}{K}\right).
\]
For the long-context analogue,
\[
\mathcal{E}_{\mathrm{ctx}}(M_{\mathrm{ctx}})
=
\delta^\star\!\left(\frac{B_{\mathrm{ctx}}}{M_{\mathrm{ctx}}}\right).
\]

\begin{table*}[t]
\centering
\small
\caption{
Theory-implied operating points for $N=5{,}000$ and $K=10$. 
Values are computed from the exact rate-distortion envelope and are not model measurements.
}
\label{tab:theory-envelope}
\begin{tabular}{l c c c c}
\toprule
Setting & Parameters & Rate & Observed error & Full / conditional error \\
\midrule
Unstructured 
& $B=160,\ M=50$ 
& $3.20$ 
& $0.011$ 
& $0.891$ \\
Unstructured 
& $B=160,\ M=500$ 
& $0.32$ 
& $0.653$ 
& $0.875$ \\
Unstructured 
& $B=160,\ M=2000$ 
& $0.08$ 
& $0.787$ 
& $0.855$ \\
Memory scaling 
& $M=1000,\ B=640$ 
& $0.64$ 
& $0.531$ 
& $0.826$ \\
Memory scaling 
& $M=1000,\ B=2560$ 
& $2.56$ 
& $0.096$ 
& $0.739$ \\
Deductive closure 
& $B=160,\ M=250,\ c=10$ 
& $0.64$ 
& $0.531$ 
& $0.716$ \\
Long-context analogue 
& $B_{\mathrm{ctx}}=320,\ M_{\mathrm{ctx}}=100$ 
& $3.20$ 
& -- 
& $0.011$ \\
Long-context analogue 
& $B_{\mathrm{ctx}}=320,\ M_{\mathrm{ctx}}=1000$ 
& $0.32$ 
& -- 
& $0.653$ \\
\bottomrule
\end{tabular}
\end{table*}

The envelope separates effects that are otherwise conflated in aggregate accuracy. 
As $M$ grows at fixed $B$, missing coverage decreases, but the per-fact rate $B/M$ also decreases, so observed-fact error can rise even when full-space error improves. 
Increasing $B$ moves the system back along the distortion-rate curve, while structure increases query coverage per independent stored fact.

\subsection{Capacity-controlled fact injection}
\label{sec:exp-fact-injection}

We next ask whether the same signature appears in controlled model probes. 
The cleanest manipulation is to hold the adaptation mechanism fixed and vary the number of independent injected facts. 
If factual error is partly compression-induced, observed-query error should increase after the fact load exceeds the effective memory available to the adapter; increasing the memory proxy should shift the transition to larger $M$.

\begin{figure}[t]
\centering
\includegraphics[width=0.82\linewidth]{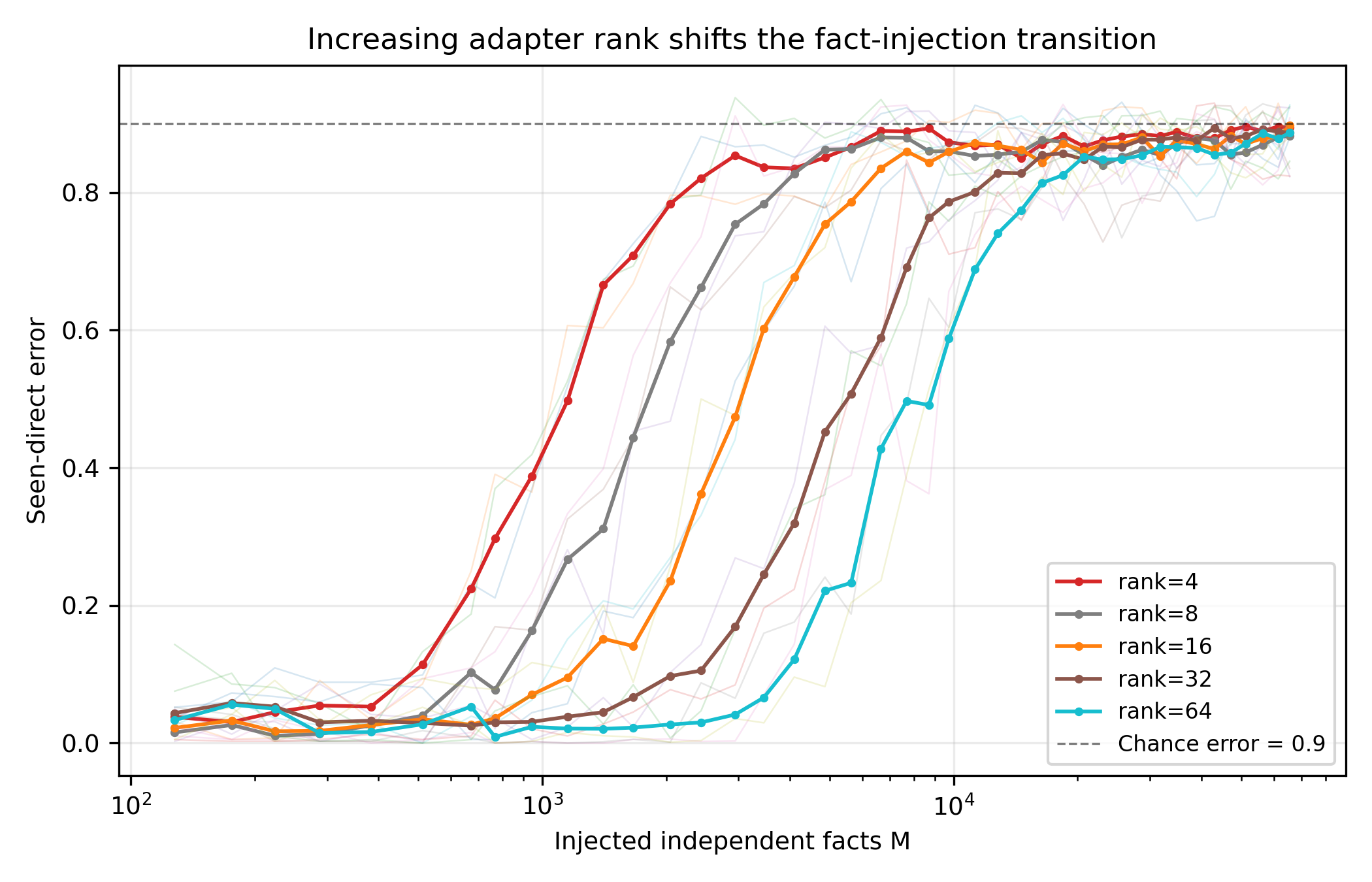}
\caption{
Increasing adapter capacity shifts the fact-injection transition. 
Qwen3-8B is adapted on independent synthetic facts while varying injected fact count $M$ and LoRA rank. 
Observed-query error rises as fact load increases at fixed rank, while higher rank delays the transition. 
The horizontal reference line denotes chance error for $K=10$. 
Representative numerical values are reported as mean $\pm$ standard error over three seeds in Appendix~\ref{app:real-model-details}.
}
\label{fig:qwen-rank-sweep}
\end{figure}

Figure~\ref{fig:qwen-rank-sweep} gives the most direct capacity test. 
At low fact load, all ranks store the injected mapping with low observed-query error. 
As $M$ increases, lower-rank adapters reach the high-error regime earlier, while larger ranks remain reliable over a wider range. 
This is the qualitative transition predicted by the coverage--compression bound.

\begin{figure}[t]
\centering
\includegraphics[width=0.82\linewidth]{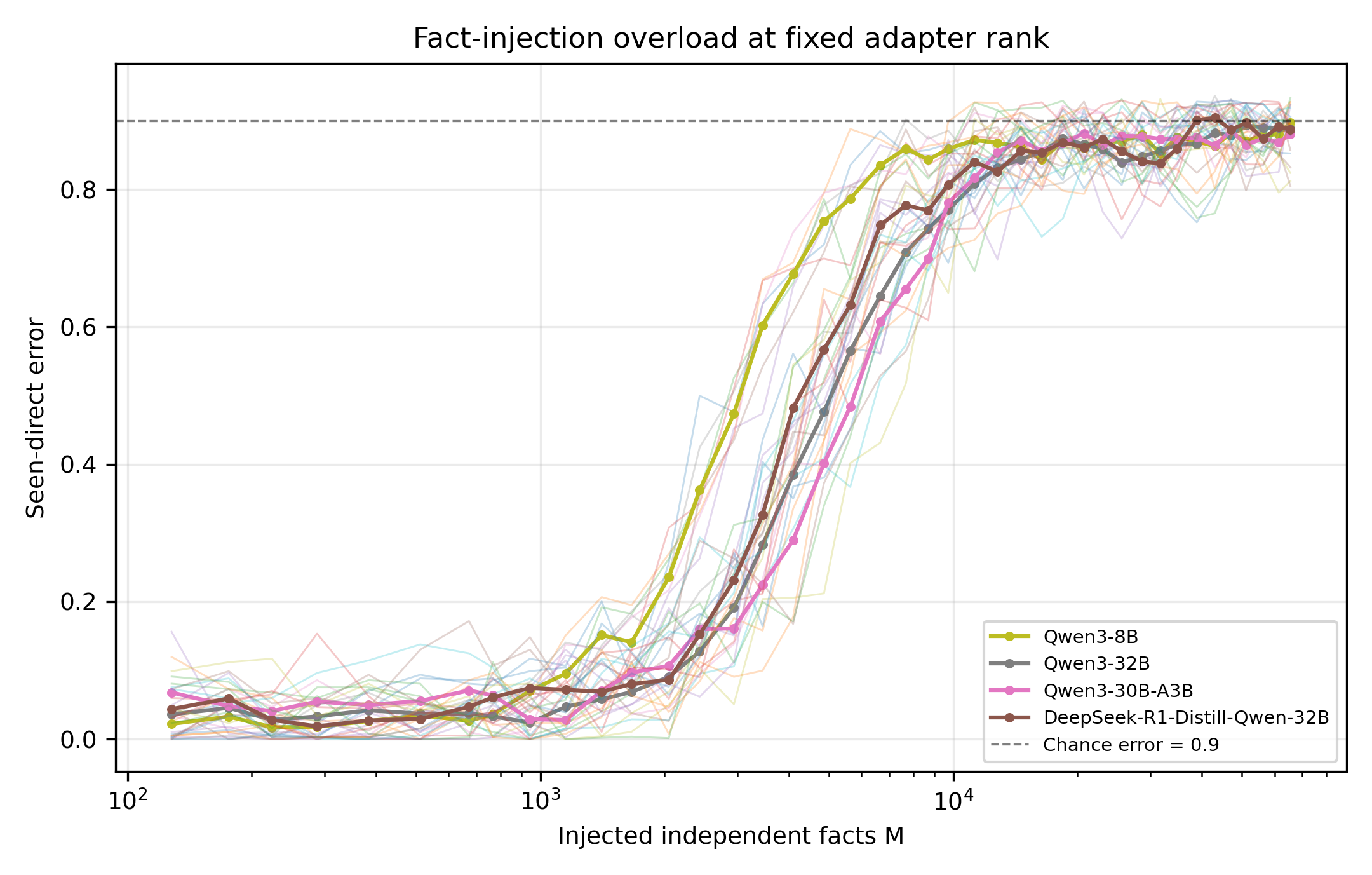}
\caption{
Compression overload at fixed adapter rank across model families. 
All models are evaluated at the same nominal LoRA rank on the seen-direct split. 
The error curves rise with the number of injected independent facts, approaching chance level for $K=10$ at high load. 
Representative numerical values are reported as mean $\pm$ standard error over three seeds in Appendix~\ref{app:real-model-details}.
}
\label{fig:rank16-model-sweep}
\end{figure}

The fixed-rank comparison in Figure~\ref{fig:rank16-model-sweep} suggests that the pattern is not confined to one checkpoint. 
The transition point varies across models, but the direction is consistent: when independent injected facts grow under a fixed adapter budget, observed-fact fidelity degrades. 
This supports the theory at the level of diagnostic signatures rather than exact numerical agreement with the idealized rate-distortion curve.

\subsection{Robustness and structure}
\label{sec:exp-robust-structure}

A possible failure mode of synthetic fact injection is template memorization. 
We therefore evaluate multiple query forms for the same injected facts. 
The seen-relation split is restricted to query forms with a unique target answer; ambiguous inverse queries are excluded.

\begin{figure}[t]
\centering
\includegraphics[width=0.82\linewidth]{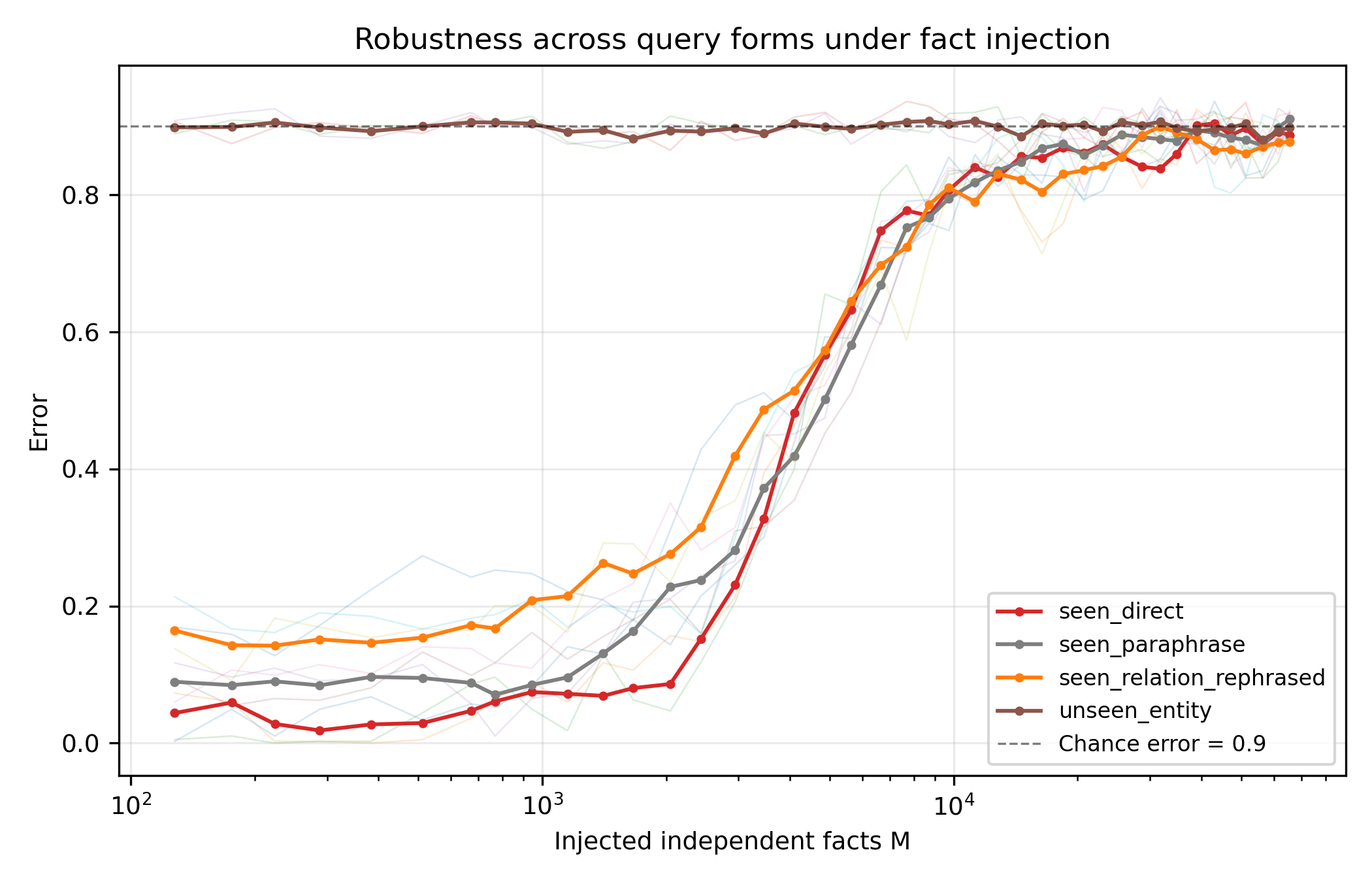}
\caption{
Query-form robustness under fact injection. 
DeepSeek-R1-Distill-Qwen-32B is evaluated on direct, paraphrased, relation-rephrased, and unseen-entity queries. 
Unseen entities remain near the chance error of $0.9$ for $K=10$, while paraphrased and relation-rephrased queries expose additional recall fragility as fact load grows. 
Representative numerical values are reported as mean $\pm$ standard error over three seeds in Appendix~\ref{app:real-model-details}.
}
\label{fig:split-robustness}
\end{figure}

Figure~\ref{fig:split-robustness} separates coverage from robustness. 
Unseen entities remain close to chance, as expected for facts that were never injected. 
The seen splits are below chance at lower loads but degrade as $M$ increases, with paraphrased and relation-rephrased queries degrading earlier than direct queries. 
Exposure alone is not sufficient; the fact must be represented with enough fidelity to survive changes in query form.

The theory also predicts that surface query count is not the right memory-load measure when many queries share the same base fact. 
We compare an unstructured condition, where each surface query has an independent label, with a structured condition, where derived queries are determined by fewer independent base labels.

\begin{figure}[t]
\centering
\includegraphics[width=0.82\linewidth]{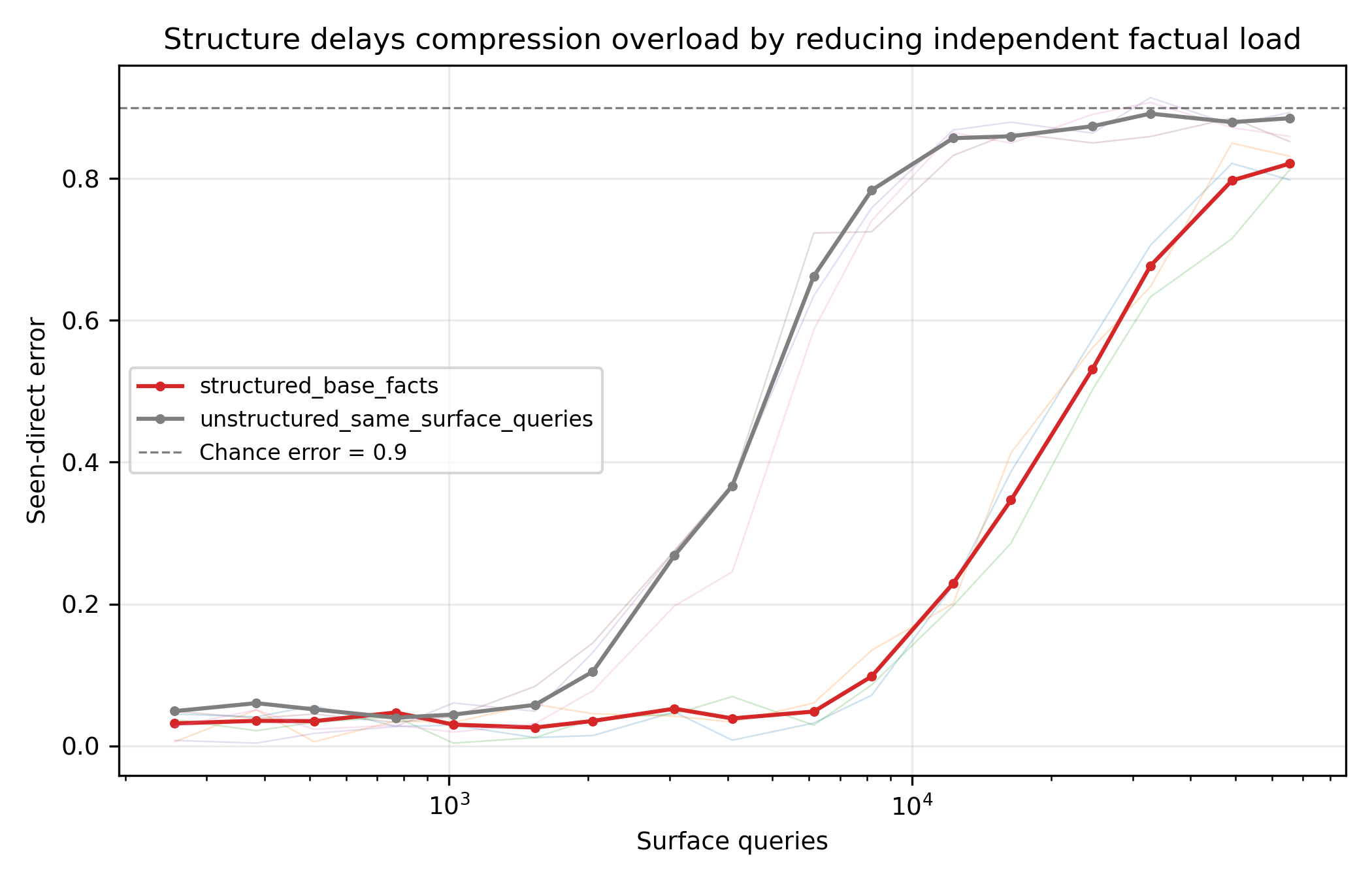}
\caption{
Structure delays compression overload by reducing independent factual load. 
The unstructured condition assigns independent labels to surface queries. 
The structured condition uses fewer independent base labels, each supporting multiple derived queries. 
Representative numerical values are reported as mean $\pm$ standard error over three seeds in Appendix~\ref{app:real-model-details}.
}
\label{fig:structure}
\end{figure}

Figure~\ref{fig:structure} supports the deductive-closure prediction. 
When each surface query corresponds to an independent label, error rises quickly with the number of queries. 
When queries share base facts, the same surface-query count imposes a smaller independent memory load, and the overload point is delayed. 
This does not claim that all reasoning reduces to deductive closure; it shows the information-theoretic role of structure in the present framework.

\subsection{Long-context analogue and summary}
\label{sec:exp-long-context}

Finally, we apply the same diagnostic lens to an in-context setting. 
Here the facts are present in the prompt, so the issue is not parametric coverage. 
The question is whether many independent key-value facts compete for limited effective working memory or attention, especially when retrieval cannot rely on literal overlap. 
This setup follows recent long-context evaluations showing that nominal context length does not necessarily imply reliable recall under multi-fact, long in-context learning, latent-structure, or non-literal retrieval demands \citep{hsieh2024ruler,zhang2024infinitebench,bai2024longbenchv2,li2024needlebench,vodrahalli2024michelangelo,li2024longiclbench,modarressi2025nolima}.

\begin{figure}[t]
\centering
\includegraphics[width=0.82\linewidth]{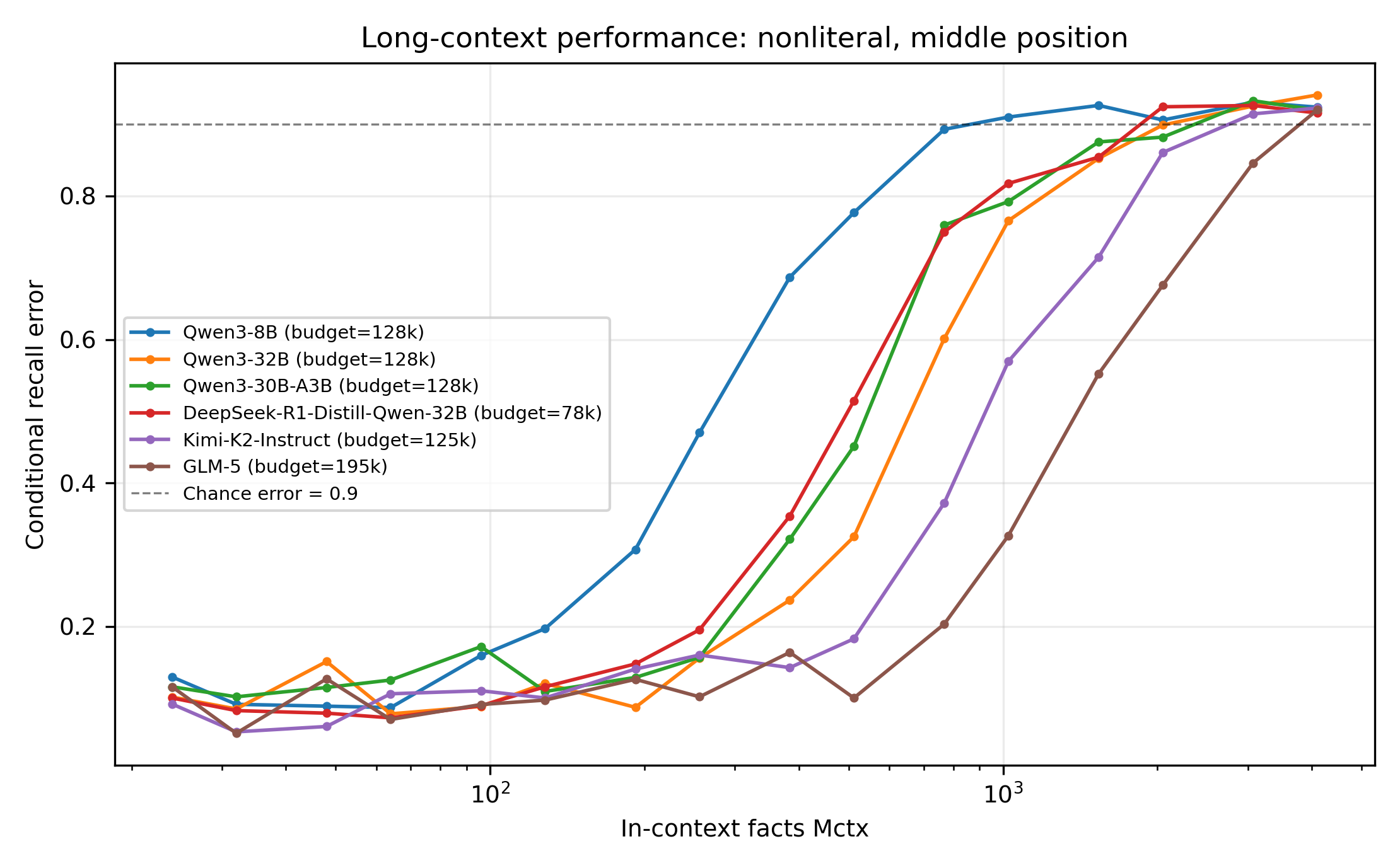}
\caption{
Long-context analogue under non-literal middle-position retrieval. 
Models are given many in-context key-value facts and queried with non-literal prompts targeting facts placed near the middle of the context. 
Error increases as the number of in-context facts grows, suggesting an analogous competition effect. 
This figure is intended as a qualitative analogue, not as a direct estimate of $B_{\mathrm{ctx}}$ or a quantitative test of Theorem~\ref{thm:rd}.
}
\label{fig:long-context}
\end{figure}

The long-context result is an analogue rather than a direct test of the parametric memory theorem. 
It shows that the same diagnostic distinction remains useful when facts are present but must be selected and used from a crowded context. 
Across probes, the same pattern recurs: increasing independent fact load under fixed effective memory raises observed-query error; increasing adapter rank shifts the transition; unseen entities remain near chance; and structure delays overload by reducing the number of independent labels that must be stored. 
The results should not be read as evidence that language models implement the ideal rate-distortion code. 
They support the narrower claim that coverage and compression are empirically separable, and that observed-fact error provides a direct diagnostic for lossy factual recall under finite effective memory.

\section{Discussion and Conclusion}
\label{sec:discussion-conclusion}

The coverage--compression view changes how closed-book factual error should be interpreted. 
A factual answer can be wrong either because the relevant fact was not effectively covered, or because the fact was observed but stored with insufficient fidelity. 
The main bound separates these mechanisms: unseen-entity error diagnoses missing coverage, while observed-query error diagnoses lossy recall of injected facts.

The bound should be read as a limit on unstructured factual memorization, not as a complete theory of hallucination. 
In the random-source model, answers are independent, so correct prediction must come from stored information rather than semantic regularity. 
When $B/M<\log_2K$, the effective memory per observed fact is insufficient for lossless recall, giving a nonzero observed-fact error floor. 
This identifies a capacity scale, not an optimal dataset size: more observations need not hurt an optimal learner, while forced shared representations or interference can degrade observed-fact fidelity.

The same decomposition explains why common interventions help. 
Retrieval changes the resource model by moving facts out of parametric memory and shifting the bottleneck to retrieval and verification \citep{yan2024crag,ru2024ragchecker}. 
Abstention changes the operating point by avoiding queries outside reliable coverage. 
Structure and reasoning reduce the number of independent facts that must be stored, while long-context organization addresses an analogous working-memory problem in which present facts still compete for limited attention or effective context capacity \citep{hsieh2024ruler,modarressi2025nolima}.

The controlled probes support this diagnostic interpretation rather than an exact numerical identification with the idealized rate-distortion curve. 
At fixed adapter rank, increasing the number of injected independent facts raises observed-query error; increasing rank shifts the transition; unseen entities remain near chance; and structured facts delay overload.

\section*{Limitations}

The model is intentionally minimal. 
The uniform mapping is a maximally unstructured random-source model; real knowledge contains semantic, logical, temporal, and distributional redundancy. 
The bit budget $B$ is an abstract effective-memory measure, not a direct parameter-count law; in the model probes, LoRA rank is only a controlled capacity proxy \citep{hu2021lora}. 
The derivation also uses a free-addressing convention and asymptotic rate-distortion theory, so explicit address costs and finite-blocklength effects may matter.

Finally, the empirical study uses controlled synthetic fact injection rather than naturally occurring factual knowledge. 
This design gives known coverage, known independent fact load, and measurable observed-query error, but it does not cover every source of open-domain hallucination, such as conflicting evidence, outdated knowledge, decoding behavior, calibration failure, or instruction-following pressure \citep{wei2024simpleqa,wei2024longformfactuality}.

\bibliography{custom}

\newpage
\appendix
\newpage
\appendix

\section{Technical Details and Proofs}
\label{app:proofs}

This appendix gives the technical details omitted from the main text. 
The main paper states the key proof ideas; here we record the full derivations, the free-addressing convention, the tightness statement, and the allocation variants used for interpretation.

\subsection{Memory Accounting and Rate-Distortion Preliminaries}
\label{app:rd-prelim}

\paragraph{Free addressing.}
Throughout the main theorem, we use the optimistic free-addressing convention. 
The training index set $S$ and the test query identity $q$ are available to the decoder as addresses. 
The budget $B$ is charged only to stored answer information. 
If the learner must also encode which queries are stored, an additional address cost on the order of
\[
\log_2 \binom{N}{M}
\]
is required. 
Thus the free-addressing convention favors the learner; any explicit address cost only tightens the memory constraint.

\paragraph{The $K$-ary distortion-rate function.}
Let $U$ be uniformly distributed over a $K$-symbol alphabet. 
Under zero-one distortion,
\[
d(u,\hat u)=\mathbf{1}\{u\neq \hat u\},
\]
the rate-distortion function of the uniform $K$-ary source is
\begin{equation}
\label{eq:app-rd}
\begin{aligned}
R_K(D)
&=
\log_2 K
-
h_2(D)
-
D\log_2(K-1),\\
&\hspace{2.0em}
0\le D\le 1-\frac{1}{K},
\end{aligned}
\end{equation}
where
\[
h_2(D)
=
-D\log_2D-(1-D)\log_2(1-D).
\]
For $D\ge 1-1/K$, the rate is zero, since random guessing already achieves distortion $1-1/K$. 
We define the inverse distortion-rate function as
\begin{equation}
\label{eq:app-delta}
\delta^\star(r)
=
\inf\{D\ge0:R_K(D)\le r\}.
\end{equation}
In particular,
\[
\delta^\star(r)=0
\quad\text{for}\quad
r\ge \log_2 K,
\]
which is the lossless memorization regime.

\paragraph{Counting intuition.}
A block of $M$ independent $K$-ary labels has $K^M$ possible sequences, requiring $M\log_2K$ bits for lossless storage. 
If a fraction $D$ of positions may be reconstructed incorrectly, then one reconstructed sequence represents roughly
\[
2^{M[h_2(D)+D\log_2(K-1)]}
\]
nearby true sequences: choose about $DM$ error positions and one of $K-1$ alternative labels at each. 
Dividing the total number of sequences by this ball size and taking logarithms gives the per-symbol rate in Eq.~\eqref{eq:app-rd}. 
Thus, allowing distortion reduces the number of bits needed per fact.

\subsection{Coverage, Compression, and Tightness}
\label{app:rd-proof}

\paragraph{Coverage-only floor.}
\label{app:coverage-proof}
Let $S\subset\mathcal{Q}$ be the training set. 
For any query $q\notin S$, the label $g(q)$ is uniformly distributed over $\mathcal{A}$ and independent of the observed labels. 
Therefore, for any deterministic or randomized predictor,
\[
\Pr[\hat g(q)=g(q)\mid q\notin S]\le \frac{1}{K},
\]
and hence
\[
\Pr[\hat g(q)\neq g(q)\mid q\notin S]\ge 1-\frac{1}{K}.
\]
A uniformly drawn test query lies outside $S$ with probability $(N-M)/N$. 
Even if the learner answers every query in $S$ perfectly, the expected full-space error is at least
\begin{equation}
\label{eq:app-coverage}
\mathcal{E}
\ge
\left(1-\frac{M}{N}\right)
\left(1-\frac{1}{K}\right).
\end{equation}

\begin{theorem}[Coverage--compression bound]
\label{thm:rd-app}
Under the setting of Section~\ref{sec:setting}, any learner with memory budget $B$ satisfies
\begin{equation}
\label{eq:app-rd-bound}
\mathcal{E}
\ge
\frac{M}{N}
\delta^\star\!\left(\frac{B}{M}\right)
+
\left(1-\frac{M}{N}\right)
\left(1-\frac{1}{K}\right).
\end{equation}
\end{theorem}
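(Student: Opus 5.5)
The plan is to condition on the training set $S$, split the full-space error into an observed part and an unobserved part, and bound each part separately.

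Fix a realization of $S$ (the argument is uniform over $S$). Write $\mathcal{E} = \frac{M}{N} D_{\mathrm{obs}} + \left(1-\frac{M}{N}\right) D_{\mathrm{miss}}$, where
\[
D_{\mathrm{obs}} = \mathbb{E}\Big[\tfrac{1}{M}\sum_{q\in S}\mathbf{1}\{\hat g(q)\neq g(q)\}\Big]
\]
and $D_{\mathrm{miss}}$ is the analogous average over $q\notin S$. This identity holds exactly because the test query is uniform and $|S|=M$.

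The unobserved term is handled by the coverage-only floor, Eq.~\eqref{eq:app-coverage}. For each $q\notin S$, the label $g(q)$ is independent of $(S,X_S,W)$ and of the decoder randomness, so each such query is answered correctly with probability at most $1/K$. Hence $D_{\mathrm{miss}}\ge 1-1/K$.

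The observed term is the rate-distortion converse. I treat $\widehat X_S=(\hat g(q))_{q\in S}$ as one reconstruction of the block $X_S$. The decoder draws $\hat g(q)$ independently per query, but I only need the marginal per-coordinate errors, so I may use any joint coupling; drawing all coordinates from $(S,W)$ together with shared or independent randomness is valid. The steps, in order, are as follows.

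First, I bound the mutual information by the memory size:
\[
I(X_S;\widehat X_S\mid S)\le I(X_S;W\mid S)\le H(W)\le B,
\]
using the Markov chain $X_S\to W\to \widehat X_S$ given $S$, and the fact that the encoder randomness is independent of $X_S$.

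Second, I use that the coordinates $X_{S,i}$ are i.i.d.\ uniform given $S$. By the chain rule and the fact that conditioning reduces entropy,
\[
I(X_S;\widehat X_S\mid S)\ge \sum_{i=1}^M I(X_{S,i};\widehat X_{S,i}\mid S).
\]

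Third, I apply the single-letter bound $I(X_{S,i};\widehat X_{S,i}\mid S)\ge R_K(D_i)$, where $D_i$ is the error probability at coordinate $i$. This follows from Fano's inequality:
\[
H(X_{S,i}\mid\widehat X_{S,i})\le h_2(D_i)+D_i\log_2(K-1),
\]
which gives exactly Eq.~\eqref{eq:app-rd} for $D_i\le 1-1/K$. For $D_i > 1-1/K$ the bound holds trivially, taking $R_K=0$ there.

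Fourth, I use convexity and monotonicity of $R_K$, extended by zero, and apply Jensen's inequality:
\[
B\ge \sum_i R_K(D_i)\ge M\,R_K(D_{\mathrm{obs}}).
\]
So $R_K(D_{\mathrm{obs}})\le B/M$, and by the definition~\eqref{eq:app-delta}, $D_{\mathrm{obs}}\ge\delta^\star(B/M)$.

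Finally, I average over $S$. Both lower bounds are independent of $S$, so substituting them into the decomposition gives Eq.~\eqref{eq:app-rd-bound}.

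The main technical care goes into two places. The first is the Jensen step: I must check that $R_K$ is convex and nonincreasing on all of $[0,1]$ once extended by zero. Convexity of $-h_2$ and linearity of the other terms give convexity on $[0,1-1/K]$. Joining with the zero branch preserves convexity because $R_K$ decreases to $0$ at $1-1/K$, where its derivative $\log_2\frac{D}{(1-D)(K-1)}$ equals $0$. The second is confirming that the randomized, per-query decoder still yields a valid Markov chain, which is why I introduce the joint reconstruction explicitly.
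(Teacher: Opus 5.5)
Your proof is correct and follows the paper's argument. You condition on $S$, bound $I(X_S;\widehat X_S\mid S)\le B$ by data processing, use the rate-distortion converse to get $D_{\mathrm{obs}}\ge\delta^\star(B/M)$, and add the $1-1/K$ floor on unobserved queries; the only difference is that you prove the converse inline (chain rule over the i.i.d.\ coordinates, Fano's inequality per coordinate, Jensen with $R_K$ extended by zero) and make the coupling of per-query outputs explicit, where the paper simply cites the converse.
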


\begin{proof}
Condition on the sampled training set $S$. 
The observed labels
\[
X_S=(g(q))_{q\in S}
\]
are $M$ independent uniform draws from $\mathcal{A}$. 
The learner encodes the observed information into a memory state $W$ with at most $2^B$ possible values. 
Therefore,
\[
H(W\mid S)\le B.
\]
Let
\[
\widehat X_S=(\hat g(q))_{q\in S}
\]
be the learner's reconstructions on the training queries. 
Under free addressing, $\widehat X_S$ is produced from $(S,W)$ together with decoder randomness. 
Thus, by the data-processing inequality,
\begin{equation}
\label{eq:app-dpi}
\begin{aligned}
I(X_S;\widehat X_S\mid S)
&\le I(X_S;W\mid S)\\
&\le H(W\mid S)\\
&\le B.
\end{aligned}
\end{equation}

Let
\[
D_S
=
\frac{1}{M}
\sum_{q\in S}
\Pr[\hat g(q)\neq g(q)\mid S]
\]
be the average reconstruction error on the observed facts. 
By the converse of rate-distortion theory for a memoryless uniform $K$-ary source,
\[
\frac{1}{M}I(X_S;\widehat X_S\mid S)
\ge
R_K(D_S).
\]
Combining this with Eq.~\eqref{eq:app-dpi} gives
\[
R_K(D_S)\le \frac{B}{M}.
\]
By the definition of $\delta^\star$, this implies
\[
D_S
\ge
\delta^\star\!\left(\frac{B}{M}\right).
\]

A uniformly drawn test query lies in $S$ with probability $M/N$, contributing observed-fact error at least $\delta^\star(B/M)$. 
It lies outside $S$ with probability $1-M/N$, in which case its label is independent of $(S,W)$ and the best possible error is at least $1-1/K$. 
Combining the two cases yields Eq.~\eqref{eq:app-rd-bound}.
\end{proof}

\paragraph{Tightness under free addressing.}
\label{app:tightness}
The lower bound is tight up to finite-blocklength effects under the same free-addressing convention. 
Condition on $S$ and encode the sequence $X_S$ using a block rate-distortion code for the uniform $K$-ary source at rate $B/M$. 
For any $D>\delta^\star(B/M)$, classical rate-distortion achievability gives codes whose observed-fact distortion approaches $D$ as $M$ grows. 
The decoder answers observed queries from the reconstructed sequence and guesses uniformly on unobserved queries. 
Therefore,
\begin{equation}
\label{eq:app-achievability}
\begin{aligned}
\mathcal{E}
&\le
\frac{M}{N}
\left[
\delta^\star\!\left(\frac{B}{M}\right)+o(1)
\right] \\
&\quad+
\left(1-\frac{M}{N}\right)
\left(1-\frac{1}{K}\right).
\end{aligned}
\end{equation}
Thus Eq.~\eqref{eq:app-rd-bound} is the natural rate-distortion envelope for the random-source model, not merely a loose lower bound.

\subsection{Allocation Regimes}
\label{app:selective-memory}

The main theorem describes the forced-compression regime in which all $M$ observed facts share one $B$-bit representation. 
For interpretation, it is useful to compare this with an idealized selective-memory policy.

\paragraph{Selective memory.}
A selective learner chooses an effective number $L\le M$ of facts to store, spends the memory budget on those facts, and guesses on the rest. 
The case $L=0$ corresponds to guessing everywhere, so the allocation curve is
\begin{equation*}
\label{eq:app-selective}
\begin{aligned}
\mathcal{E}_{\mathrm{sel}}(M)
&=
\min\left\{
1-\frac{1}{K},
\min_{1\le L\le M}\Phi(L)
\right\},\\
\Phi(L)
&=
\frac{L}{N}
\delta^\star\!\left(\frac{B}{L}\right)
+
\left(1-\frac{L}{N}\right)
\left(1-\frac{1}{K}\right)
\end{aligned}
\end{equation*}
Because the feasible set grows with $M$, $\mathcal{E}_{\mathrm{sel}}(M)$ is non-increasing in $M$. 
This curve captures memory-limited diminishing returns; it should not be read as saying that more data hurts an optimal learner.

The lossless endpoint gives the scale
\[
L_{\mathrm{lossless}}
\approx
\frac{B}{\log_2K}.
\]
This is the approximate number of unstructured $K$-ary facts that can be stored without error under free addressing.

\paragraph{Forced compression.}
If all $M$ observed facts must share the same $B$-bit representation, the operating curve is
\begin{equation*}
\label{eq:app-forced}
\mathcal{E}_{\mathrm{forced}}(M)
=
\frac{M}{N}\delta^\star\!\left(\frac{B}{M}\right)
+
\left(1-\frac{M}{N}\right)\left(1-\frac{1}{K}\right).
\end{equation*}
As $M$ grows, the unseen-query term decreases because more queries are covered, but the observed-fact distortion can increase because the per-fact rate $B/M$ decreases. 
This is the compression-overload effect. 
A U-shaped full-space curve is not implied by finite memory alone; it requires an additional mechanism such as forced uniform allocation, interference, or an evaluation distribution concentrated on competing facts.

\subsection{Structure and Abstention}
\label{app:deductive-proof}

\paragraph{Deductive closure.}
The structured setting formalizes the idea that deterministic structure reduces the number of independent facts that must be stored. 
Let there be $N_{\mathrm{base}}$ independent base facts
\[
Z_1,\ldots,Z_{N_{\mathrm{base}}},
\qquad
Z_i\sim\mathrm{Unif}(\mathcal{A}).
\]
Each base fact determines a block of $c$ derived queries, so the total number of queries is
\[
N=cN_{\mathrm{base}}.
\]
For each derived query in the block associated with $Z_i$, assume the answer is a known one-to-one transformation of $Z_i$. 
Thus, an error in reconstructing $Z_i$ induces the same zero-one error on its derived queries.

\begin{proposition}[Deductive closure reduces effective memory load]
\label{prop:deductive}
Suppose the learner observes $M$ base facts and compresses their labels into $B$ bits. 
Under the forced-compression analogue of Theorem~\ref{thm:rd-app}, and assuming $cM\le N$, the full-space error satisfies
\begin{equation*}
\label{eq:app-deductive}
\mathcal{E}_{\mathrm{struct}}
\ge
\frac{cM}{N}
\delta^\star\!\left(\frac{B}{M}\right)
+
\left(1-\frac{cM}{N}\right)
\left(1-\frac{1}{K}\right).
\end{equation*}
\end{proposition}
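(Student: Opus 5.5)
The plan is to reduce Proposition~\ref{prop:deductive} to the argument of Theorem~\ref{thm:rd-app}, applied to the base labels instead of the surface labels. Let $S_{\mathrm{base}}\subseteq\{1,\ldots,N_{\mathrm{base}}\}$ be the $M$ observed base indices, and let $Z_{S}=(Z_i)_{i\in S_{\mathrm{base}}}$. Conditioned on $S_{\mathrm{base}}$, these are $M$ i.i.d.\ uniform $K$-ary symbols, and the memory $W$ takes at most $2^B$ values. The $cM$ surface queries generated by the observed base facts form the covered set $\mathcal{C}$, with $|\mathcal{C}|=cM\le N$. The remaining $N-cM$ surface queries come from unobserved base facts. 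I would split the full-space error as $\mathcal{E}_{\mathrm{struct}}=\frac{cM}{N}\,\mathcal{E}_{\mathcal{C}}+\left(1-\frac{cM}{N}\right)\mathcal{E}_{\bar{\mathcal{C}}}$ and bound each conditional error separately.

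The uncovered term follows the coverage-only floor. For a query derived from an unobserved $Z_j$, the answer is a fixed bijection of $Z_j$. It is therefore uniform on $\mathcal{A}$ and independent of $(S_{\mathrm{base}},Z_S,W)$, so any randomized predictor has success probability at most $1/K$. This gives $\mathcal{E}_{\bar{\mathcal{C}}}\ge 1-1/K$.

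For the covered term, fix a base index $i\in S_{\mathrm{base}}$ and write $\tau_{i,1},\ldots,\tau_{i,c}$ for its $c$ known bijections. For each derived query $(i,k)$, define $\widehat Z_i^{(k)}=\tau_{i,k}^{-1}(\hat g(i,k))$. Since $\tau_{i,k}$ is one-to-one, $\mathbf{1}\{\hat g(i,k)\neq\tau_{i,k}(Z_i)\}=\mathbf{1}\{\widehat Z_i^{(k)}\neq Z_i\}$. Now fix $k$. The vector $\widehat Z^{(k)}=(\widehat Z_i^{(k)})_{i\in S_{\mathrm{base}}}$ is a function of $(S_{\mathrm{base}},W)$ and decoder randomness. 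The same data-processing chain as in Eq.~\eqref{eq:app-dpi} then gives $I(Z_S;\widehat Z^{(k)}\mid S_{\mathrm{base}})\le B$. The rate-distortion converse gives $R_K(D^{(k)})\le B/M$, and hence $D^{(k)}\ge\delta^\star(B/M)$, where $D^{(k)}$ is the average error of the $k$-th derived reconstruction over the $M$ observed bases. Averaging over $k=1,\ldots,c$ yields $\mathcal{E}_{\mathcal{C}}=\frac1c\sum_k D^{(k)}\ge\delta^\star(B/M)$. Combining the two terms gives the stated bound.

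The step I expect to need the most care is the covered term. A learner is not forced to answer the $c$ derived queries of one base fact consistently, so one cannot simply assert that an error on $Z_i$ propagates identically to all $c$ queries. Building a separate reconstruction per coordinate $k$ handles this: each $\widehat Z^{(k)}$ individually satisfies the rate constraint, and averaging over $k$ then lower-bounds the error of every learner, including inconsistent ones. A secondary point to check is that no extra structure in the transformations can help. This holds because the $\tau_{i,k}$ are known bijections, so they convey no information about $Z_i$ beyond what $W$ contains.
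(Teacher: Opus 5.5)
Your proposal is correct and follows the paper's proof. Both split the surface queries into the $cM$ covered by observed base facts and the rest, apply data processing and the rate-distortion converse to the $M$ base labels to get $\delta^\star(B/M)$ on the covered part, and use the $1-1/K$ guessing floor on the uncovered part. Your per-derivation reconstructions $\widehat Z^{(k)}$ also tighten the one step the paper glosses over: the paper simply asserts that base-label errors propagate through the one-to-one maps, which implicitly assumes the learner answers all $c$ derived queries of a base fact consistently from a single reconstruction, and your argument covers learners that do not.
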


\begin{proof}
The $M$ observed base labels are independent uniform $K$-ary symbols. 
By the same rate-distortion argument as in Theorem~\ref{thm:rd-app}, their average reconstruction error is at least $\delta^\star(B/M)$. 
Each reconstructed base fact supports $c$ derived queries, and the one-to-one maps propagate base-label errors to derived-query errors. 
Thus, the observed base facts cover $cM$ of the $N$ derived queries. 
The remaining queries depend on unobserved base facts and have guessing error at least $1-1/K$. 
Combining the covered and uncovered parts gives Eq.~\eqref{eq:app-deductive}.
\end{proof}

A selective structured variant stores only $L\le M$ base facts with nontrivial fidelity:
\begin{equation*}
\label{eq:app-deductive-selective}
\begin{aligned}
\mathcal{E}_{\mathrm{struct,sel}}
&=
\min\left\{
1-\frac{1}{K},
\min_{\substack{1\le L\le M\\ cL\le N}}
\Psi(L)
\right\},\\
\Psi(L)
&=
\frac{cL}{N}
\delta^\star\!\left(\frac{B}{L}\right)
+
\left(1-\frac{cL}{N}\right)
\left(1-\frac{1}{K}\right).
\end{aligned}
\end{equation*}
Compared with the unstructured case, the factor $c$ increases query coverage per stored independent fact. 
This is the formal sense in which deductive closure relaxes the hallucination floor.

\paragraph{Abstention.}
\label{app:abstention}
The main setting requires the learner to answer every query. 
If abstention is allowed, the learner can restrict itself to a subset of queries for which it has reliable memory. 
Suppose the learner answers only a fraction $1-\gamma$ of the query space and abstains on the remaining fraction $\gamma$. 
Let $L$ be the number of effectively stored facts among the answered queries. 
Under free addressing, the conditional error on answered queries is
\begin{equation}
\label{eq:app-abstention}
\begin{aligned}
\mathcal{E}_{\mathrm{ans}}
&=
\frac{L}{(1-\gamma)N}
\delta^\star\!\left(\frac{B}{L}\right) \\
&\quad+
\left(
1-\frac{L}{(1-\gamma)N}
\right)
\left(1-\frac{1}{K}\right).
\end{aligned}
\end{equation}
subject to
\[
0\le L\le M,
\qquad
0\le \gamma<1,
\qquad
L\le (1-\gamma)N.
\]
This expression makes explicit the coverage--accuracy trade-off: abstention allows memory to be concentrated on fewer answered queries, but lowers answer coverage.

\section{Theory-Implied Simulation Details}
\label{app:simulation-details}

The main text reports representative operating points from the rate-distortion envelope. 
Here we give the full theory-implied sweeps used to construct Table~\ref{tab:theory-envelope}. 
All values are computed by numerically inverting the $K$-ary distortion-rate function with $K=10$ and rounding to three decimals. 
These are not language-model measurements; they instantiate the idealized envelope analyzed in Section~\ref{sec:theory}.

\paragraph{Numerical inversion.}
For a target rate $r$, we compute
\[
\delta^\star(r)=\inf\{D\ge0:R_K(D)\le r\}
\]
by one-dimensional bisection over $D\in[0,1-1/K]$. 
If $r\ge \log_2K$, we set $\delta^\star(r)=0$. 
If $r\le 0$, we set $\delta^\star(r)=1-1/K$. 
The reported full-space risks are then obtained by substituting $\delta^\star(B/M)$ into the corresponding operating curves.

\subsection{Unstructured Coverage--Compression Decomposition}
\label{app:sim-unstructured}

We use $N=5{,}000$, $K=10$, and $B=160$ bits. 
The coverage-only floor is
\begin{equation*}
\mathcal{E}_{\mathrm{cov}}(M)
=
\left(1-\frac{M}{N}\right)
\left(1-\frac{1}{K}\right).
\end{equation*}

For the lossless selective baseline, we use
\begin{equation*}
\begin{aligned}
L_{\mathrm{lossless}}
&=
\min\left(M,\left\lfloor\frac{B}{\log_2K}\right\rfloor\right),\\
\mathcal{E}_{\mathrm{sel,lossless}}(M)
&=
\left(1-\frac{L_{\mathrm{lossless}}}{N}\right)
\left(1-\frac{1}{K}\right).
\end{aligned}
\end{equation*}

The forced-compression curve is
\begin{equation*}
\begin{aligned}
\mathcal{E}_{\mathrm{forced}}(M)
&=
\frac{M}{N}\delta^\star\!\left(\frac{B}{M}\right) \\
&\quad+
\left(1-\frac{M}{N}\right)
\left(1-\frac{1}{K}\right).
\end{aligned}
\end{equation*}

\begin{table*}[htb]
\centering
\small
\caption{
Theory-implied sweep for unstructured factual recall with $N=5{,}000$, $K=10$, and $B=160$ bits.
}
\label{tab:app-unstructured}
\begin{tabular}{c c c c c c}
\toprule
$M$ 
& $B/M$ 
& $\delta^\star(B/M)$
& Coverage floor
& Selective lossless
& Forced risk \\
\midrule
25   & 6.400 & 0.000 & 0.896 & 0.896 & 0.896 \\
50   & 3.200 & 0.011 & 0.891 & 0.891 & 0.891 \\
100  & 1.600 & 0.275 & 0.882 & 0.891 & 0.888 \\
250  & 0.640 & 0.531 & 0.855 & 0.891 & 0.882 \\
500  & 0.320 & 0.653 & 0.810 & 0.891 & 0.875 \\
1000 & 0.160 & 0.734 & 0.720 & 0.891 & 0.867 \\
2000 & 0.080 & 0.787 & 0.540 & 0.891 & 0.855 \\
5000 & 0.032 & 0.831 & 0.000 & 0.891 & 0.831 \\
\bottomrule
\end{tabular}
\end{table*}

\subsection{Memory Scaling}
\label{app:sim-memory}

\subsection{Deductive Closure}
\label{app:sim-structure}

We use $N=5{,}000$, $K=10$, $B=160$, and closure factor $c=10$. 
Each observed base fact supports $c$ derived queries. 
The structured operating curve is
\[
\mathcal{E}_{\mathrm{struct}}(M)
=
\frac{cM}{N}\delta^\star\!\left(\frac{B}{M}\right)
+
\left(1-\frac{cM}{N}\right)
\left(1-\frac{1}{K}\right)
\]
for $cM\le N$.

\begin{table*}[htb]
\centering
\small
\caption{
Theory-implied sweep for deductive closure with $N=5{,}000$, $K=10$, $B=160$, and $c=10$.
}
\label{tab:app-deductive}
\begin{tabular}{c c c c}
\toprule
Observed base facts $M$
& Effective covered queries $cM$
& Unstructured risk
& Structured risk \\
\midrule
25  & 250  & 0.896 & 0.855 \\
50  & 500  & 0.891 & 0.811 \\
100 & 1000 & 0.888 & 0.775 \\
250 & 2500 & 0.882 & 0.716 \\
500 & 5000 & 0.875 & 0.653 \\
\bottomrule
\end{tabular}
\end{table*}

\subsection{Long-Context Analogue}
\label{app:sim-long-context}

For the long-context analogue, we set $K=10$ and $B_{\mathrm{ctx}}=320$ bits. 
The reported value is the predicted conditional recall error on in-context facts:
\[
\mathcal{E}_{\mathrm{ctx}}(M_{\mathrm{ctx}})
=
\delta^\star\!\left(\frac{B_{\mathrm{ctx}}}{M_{\mathrm{ctx}}}\right).
\]
This is a theory-implied analogue, not a measurement of any particular long-context model.

\begin{table}[!t]
\centering
\small
\caption{
Theory-implied sweep for the long-context factual recall analogue with $K=10$ and $B_{\mathrm{ctx}}=320$ bits.
}
\label{tab:app-long-context}
\begin{tabular}{c c c}
\toprule
$M_{\mathrm{ctx}}$
& $B_{\mathrm{ctx}}/M_{\mathrm{ctx}}$
& Predicted recall error \\
\midrule
50   & 6.400 & 0.000 \\
100  & 3.200 & 0.011 \\
250  & 1.280 & 0.350 \\
500  & 0.640 & 0.531 \\
1000 & 0.320 & 0.653 \\
2000 & 0.160 & 0.734 \\
\bottomrule
\end{tabular}
\end{table}

\section{Real-Model Probe Details}
\label{app:real-model-details}

This section reports the implementation details for the controlled model probes in Section~\ref{sec:experiments}. 
The goal is to make the empirical signatures auditable: the injected facts are known by construction, the unseen-entity split is disjoint from the injected facts, and observed-query error can be measured directly. 
Unless otherwise stated, reported errors are mean $\pm$ standard error over seeds $\{3407,2025,1337\}$.

\subsection{Hardware, Software, and Data}
\label{app:hardware-data}

All adaptation runs were performed on a single node with eight NVIDIA A100-SXM4-80GB GPUs connected by NVLink. 
We used distributed data parallel training with \texttt{torchrun}, bf16 precision, CUDA 12.1, NVIDIA driver 535.xx, PyTorch 2.4.x, Transformers 4.45.x, PEFT 0.12.x, and Accelerate 0.34.x.

\begin{table*}[t]
\centering
\caption{
Hardware and software environment for real-model probes.
}
\label{tab:hardware-software}
\begin{tabular*}{\textwidth}{@{\extracolsep{\fill}} l l}
\toprule
Field & Value \\
\midrule
Node count & 1 \\
GPU count & 8 \\
GPU type & NVIDIA A100-SXM4-80GB \\
Interconnect & NVLink \\
Precision & bf16 \\
Distributed backend & \texttt{torchrun\_ddp} \\
CUDA & 12.1 \\
NVIDIA driver & 535.xx \\
PyTorch & 2.4.x \\
Transformers & 4.51.x \\
PEFT & 0.12.x \\
Accelerate & 0.34.x \\
\bottomrule
\end{tabular*}
\end{table*}

Each synthetic fact is an entity-label mapping
\[
e_i \mapsto a_i,
\qquad
a_i\in\{0,\ldots,K-1\}.
\]
We use $K=10$, so the chance-level error is $1-1/K=0.9$. 
Labels are sampled uniformly from $\{0,\ldots,9\}$. 
Each fact is expressed with two training templates. 
The unseen-entity pool is the same size as the injected pool for each fact load, and train/test entity overlap is disallowed.

\begin{table*}[t]
\centering
\caption{
Synthetic fact-injection data construction.
}
\label{tab:fact-data}
\begin{tabular*}{\textwidth}{@{\extracolsep{\fill}} l l}
\toprule
Field & Value \\
\midrule
Answer alphabet size & $K=10$ \\
Chance error & 0.900 \\
Fact loads & $M\in\{512,2048,8192,28672\}$ \\
Label distribution & Uniform over $\{0,\ldots,9\}$ \\
Training templates per fact & 2 \\
Evaluation splits & Seen-direct, seen-paraphrase, seen-relation, unseen-entity \\
Unseen pool size & Same as injected pool \\
Train/test entity overlap & False \\
Answer normalization & Strip whitespace; parse first integer in $\{0,\ldots,9\}$; invalid output counted wrong \\
Seeds & 3407, 2025, 1337 \\
\bottomrule
\end{tabular*}
\end{table*}

\subsection{Adaptation and Decoding Configuration}
\label{app:adaptation-config}

All fact-injection probes use LoRA adapters. 
LoRA rank is treated as a controlled capacity proxy and is not interpreted as the theoretical bit budget $B$. 
Across models, LoRA is applied to \texttt{q\_proj}, \texttt{v\_proj}, and \texttt{o\_proj}; LoRA dropout is 0.05 and LoRA alpha is set to $2r$, where $r$ is the LoRA rank. 
We use gradient checkpointing, cosine learning-rate scheduling, warmup ratio 0.03, weight decay 0.0, maximum gradient norm 1.0, and maximum sequence length 96. 
Checkpoints are saved only at the final training step. 
Evaluation uses deterministic decoding with temperature 0.0, disabled top-$p$ sampling, \texttt{do\_sample=false}, and maximum new tokens 4.

\begin{table*}[t]
\centering
\caption{
Common training and decoding configuration for LoRA fact-injection probes.
}
\label{tab:training-common}
\begin{tabular*}{\textwidth}{@{\extracolsep{\fill}} l l}
\toprule
Field & Value \\
\midrule
Optimizer & AdamW \\
Scheduler & Cosine \\
Warmup ratio & 0.03 \\
Weight decay & 0.0 \\
Maximum gradient norm & 1.0 \\
Maximum sequence length & 96 \\
LoRA target modules & \texttt{q\_proj}, \texttt{v\_proj}, \texttt{o\_proj} \\
LoRA dropout & 0.05 \\
LoRA alpha & $2r$ \\
Gradient checkpointing & True \\
Save strategy & Final only \\
Decoding temperature & 0.0 \\
Top-$p$ & Disabled \\
Sampling & \texttt{do\_sample=false} \\
Maximum new tokens & 4 \\
\bottomrule
\end{tabular*}
\end{table*}

\begin{table*}[t]
\centering
\caption{
Model-specific adaptation configuration. 
Global batch size is computed as GPUs $\times$ per-GPU batch $\times$ gradient accumulation.
}
\label{tab:model-training-config}
\begin{tabular*}{\textwidth}{@{\extracolsep{\fill}} l c c c c c c}
\toprule
Model & Rank(s) & LR & Epochs & Per-GPU batch & Grad. accum. & Global batch \\
\midrule
Qwen/Qwen3-8B & 4, 16, 64 & $2{\times}10^{-4}$ & 10 & 8 & 2 & 128 \\
Qwen/Qwen3-32B & 16 & $1{\times}10^{-4}$ & 8 & 2 & 8 & 128 \\
Qwen/Qwen3-30B-A3B & 16 & $1{\times}10^{-4}$ & 8 & 2 & 8 & 128 \\
DeepSeek & 16 & $8{\times}10^{-5}$ & 8 & 2 & 8 & 128 \\
\bottomrule
\end{tabular*}
\end{table*}

\begin{table*}[t]
\centering
\caption{
Approximate update counts for each fact load. 
Each injected fact contributes two training examples.
}
\label{tab:update-counts}
\begin{tabular*}{\textwidth}{@{\extracolsep{\fill}} l c c c c}
\toprule
Model group & $M$ & Training examples & Epochs & Approx. update steps \\
\midrule
Qwen3-8B & 512 & 1,024 & 10 & 80 \\
Qwen3-8B & 2,048 & 4,096 & 10 & 320 \\
Qwen3-8B & 8,192 & 16,384 & 10 & 1,280 \\
Qwen3-8B & 28,672 & 57,344 & 10 & 4,480 \\
\midrule
32B / 30B / DeepSeek & 512 & 1,024 & 8 & 64 \\
32B / 30B / DeepSeek & 2,048 & 4,096 & 8 & 256 \\
32B / 30B / DeepSeek & 8,192 & 16,384 & 8 & 1,024 \\
32B / 30B / DeepSeek & 28,672 & 57,344 & 8 & 3,584 \\
\bottomrule
\end{tabular*}
\end{table*}

\subsection{Capacity-Controlled Fact Injection Results}
\label{app:capacity-results}

Table~\ref{tab:qwen-rank-sweep} reports the Qwen3-8B rank sweep corresponding to Figure~\ref{fig:qwen-rank-sweep}. 
Increasing LoRA rank shifts the high-error transition to larger fact loads.

\begin{table*}[t]
\centering
\caption{
Representative Qwen3-8B rank-sweep results on seen-direct queries for ranks $4$, $16$, and $64$. 
Errors are mean $\pm$ standard error over seeds $\{3407,2025,1337\}$.
}
\label{tab:qwen-rank-sweep}
\begin{tabular*}{\textwidth}{@{\extracolsep{\fill}} l c c c}
\toprule
Model & Rank & $M$ & Seen-direct error \\
\midrule
Qwen3-8B & 4  & 512    & $0.052 \pm 0.006$ \\
Qwen3-8B & 4  & 2,048  & $0.118 \pm 0.010$ \\
Qwen3-8B & 4  & 8,192  & $0.704 \pm 0.018$ \\
Qwen3-8B & 4  & 28,672 & $0.884 \pm 0.006$ \\
\midrule
Qwen3-8B & 16 & 512    & $0.032 \pm 0.005$ \\
Qwen3-8B & 16 & 2,048  & $0.047 \pm 0.006$ \\
Qwen3-8B & 16 & 8,192  & $0.382 \pm 0.021$ \\
Qwen3-8B & 16 & 28,672 & $0.862 \pm 0.010$ \\
\midrule
Qwen3-8B & 64 & 512    & $0.021 \pm 0.004$ \\
Qwen3-8B & 64 & 2,048  & $0.030 \pm 0.004$ \\
Qwen3-8B & 64 & 8,192  & $0.105 \pm 0.012$ \\
Qwen3-8B & 64 & 28,672 & $0.712 \pm 0.026$ \\
\bottomrule
\end{tabular*}
\end{table*}

Table~\ref{tab:fixed-rank-models} reports the fixed-rank model comparison corresponding to Figure~\ref{fig:rank16-model-sweep}. 
All models are evaluated at LoRA rank 16 on seen-direct queries.

\begin{table*}[t]
\centering
\caption{
Fixed-rank comparison on seen-direct queries. 
All models use LoRA rank 16. 
Errors are mean $\pm$ standard error over seeds $\{3407,2025,1337\}$.
}
\label{tab:fixed-rank-models}
\begin{tabular*}{\textwidth}{@{\extracolsep{\fill}} l c c c}
\toprule
Model & Rank & $M$ & Seen-direct error \\
\midrule
Qwen3-8B & 16 & 512    & $0.032 \pm 0.005$ \\
Qwen3-8B & 16 & 2,048  & $0.047 \pm 0.006$ \\
Qwen3-8B & 16 & 8,192  & $0.382 \pm 0.021$ \\
Qwen3-8B & 16 & 28,672 & $0.862 \pm 0.010$ \\
\midrule
Qwen3-32B & 16 & 512    & $0.024 \pm 0.004$ \\
Qwen3-32B & 16 & 2,048  & $0.036 \pm 0.005$ \\
Qwen3-32B & 16 & 8,192  & $0.291 \pm 0.019$ \\
Qwen3-32B & 16 & 28,672 & $0.818 \pm 0.014$ \\
\midrule
Qwen3-30B-A3B & 16 & 512    & $0.041 \pm 0.006$ \\
Qwen3-30B-A3B & 16 & 2,048  & $0.066 \pm 0.008$ \\
Qwen3-30B-A3B & 16 & 8,192  & $0.431 \pm 0.024$ \\
Qwen3-30B-A3B & 16 & 28,672 & $0.855 \pm 0.012$ \\
\midrule
DeepSeek-R1-Distill-Qwen-32B & 16 & 512    & $0.035 \pm 0.006$ \\
DeepSeek-R1-Distill-Qwen-32B & 16 & 2,048  & $0.052 \pm 0.007$ \\
DeepSeek-R1-Distill-Qwen-32B & 16 & 8,192  & $0.334 \pm 0.022$ \\
DeepSeek-R1-Distill-Qwen-32B & 16 & 28,672 & $0.846 \pm 0.011$ \\
\bottomrule
\end{tabular*}
\end{table*}

\subsection{Query-Form Robustness and Structure Results}
\label{app:robust-structure-results}

Table~\ref{tab:deepseek-splits} reports split-level robustness results for DeepSeek-R1-Distill-Qwen-32B. 
The unseen-entity split remains near chance, while paraphrased and relation-rephrased queries degrade earlier than direct queries.

\begin{table*}[t]
\centering
\caption{
Query-form robustness for DeepSeek-R1-Distill-Qwen-32B at LoRA rank 16. 
Errors are mean $\pm$ standard error over seeds $\{3407,2025,1337\}$.
}
\label{tab:deepseek-splits}
\begin{tabular*}{\textwidth}{@{\extracolsep{\fill}} l c c c c c}
\toprule
Model & $M$ & Seen-direct & Seen-para & Seen-relation & Unseen-entity \\
\midrule
R1-Distill-Qwen-32B & 512    & $0.035\pm0.006$ & $0.152\pm0.014$ & $0.194\pm0.016$ & $0.889\pm0.008$ \\
R1-Distill-Qwen-32B & 2,048  & $0.052\pm0.007$ & $0.196\pm0.013$ & $0.238\pm0.015$ & $0.901\pm0.006$ \\
R1-Distill-Qwen-32B & 8,192  & $0.334\pm0.022$ & $0.571\pm0.026$ & $0.626\pm0.024$ & $0.897\pm0.004$ \\
R1-Distill-Qwen-32B & 28,672 & $0.846\pm0.011$ & $0.878\pm0.008$ & $0.891\pm0.006$ & $0.902\pm0.002$ \\
\bottomrule
\end{tabular*}
\end{table*}

Table~\ref{tab:structure-results} reports the structured versus unstructured comparison corresponding to Figure~\ref{fig:structure}. 
The structured condition uses closure factor $c=10$, so the number of independent base facts is approximately one tenth of the surface-query count.

\begin{table*}[t]
\centering
\caption{
Structured versus unstructured fact-injection results. 
Errors are mean $\pm$ standard error over seeds $\{3407,2025,1337\}$.
}
\label{tab:structure-results}
\begin{tabular*}{\textwidth}{@{\extracolsep{\fill}} l c c c}
\toprule
Condition & Surface queries & Independent base facts & Error \\
\midrule
Unstructured independent & 512    & 512    & $0.044 \pm 0.006$ \\
Unstructured independent & 2,048  & 2,048  & $0.083 \pm 0.009$ \\
Unstructured independent & 8,192  & 8,192  & $0.447 \pm 0.023$ \\
Unstructured independent & 28,672 & 28,672 & $0.866 \pm 0.010$ \\
\midrule
Structured $c=10$ & 512    & 52    & $0.026 \pm 0.004$ \\
Structured $c=10$ & 2,048  & 205   & $0.035 \pm 0.005$ \\
Structured $c=10$ & 8,192  & 820   & $0.091 \pm 0.011$ \\
Structured $c=10$ & 28,672 & 2,868 & $0.563 \pm 0.028$ \\
\bottomrule
\end{tabular*}
\end{table*}

\end{document}